\definecolor{DarkBlue}{rgb}{0.1,0.1,0.5}
\definecolor{DarkGreen}{rgb}{0.1,0.5,0.1}
\renewcommand*{\backref}[1]{}
\renewcommand*{\backrefalt}[4]{%
    \ifcase #1 (Not cited.)%
    \or        (Cited on page~#2)%
    \else      (Cited on pages~#2)%
    \fi}
\newcommand{\printfnsymbol}[1]{%
  \textsuperscript{\@fnsymbol{#1}}%
}
\newcommand{\err}{{\mathtt{err}}}
\newcommand{\fstar}{{f^{\star}}}
\newcommand{\fhat}{{\widehat{f}}}
\newcommand{\pr}{{\mathbb{P}}}
\newcommand{\red}[1]{{\leavevmode\color{red}{#1}}}
\newcommand\todo[1]{{\red{TODO: {#1}}}}
\newcommand\toref{\red{[REF]}}
\newcommand{\vineet}[1]{{#1}}
\newcommand{\ganesh}[1]{{#1}}
\newcommand{\naive}{{na\"ive}}
\newcommand\expect[2]{\mathbb{E}_{#1}{\left[ {#2} \right]}}
\DeclareMathOperator*{\argmax}{argmax}
\DeclareMathOperator*{\argmin}{argmin}
\DeclareMathOperator{\sign}{sign}
\newcommand{\1}[1]{\mathds{1}{\{{#1}\}}}
\newtheorem{theorem}{Theorem}
\newcommand{\X}{{\cal{X}}}
\newcommand{\calH}{{\cal{H}}}
\newcommand{\Y}{{\cal{Y}}}
\newcommand{\R}{{\mathbb{R}}}
\newcommand{\fapx}{{\hat{f}}}
\newcommand{\Fapx}{{\hat{F}}}
\newcommand{\iid}{{\stackrel{_{\text{iid}}}{\sim}}}
\newcommand{\cg}{{\texttt{cg}}}
\newtheorem{observation}[theorem]{Observation}
\newtheorem{definition}{Definition}
\newtheorem{lemma}[theorem]{Lemma}
\newtheorem{claim}[theorem]{Claim}
\newtheorem{corollary}[theorem]{Corollary}
\newcommand{\wvec}{{\mathbf{w}}}
\newcommand{\xvec}{{\mathbf{x}}}
\newcommand{\yvec}{{\mathbf{y}}}
\newcommand{\admissible}{admissible}
\newcommand{\SVC}{\textnormal{SVC}}
\newcommand{\VC}{\textnormal{VC}}
\newcommand{\POP}{\textsc{POP}}
\title{Strategic Classification in the Dark
}
\author{Ganesh Ghalme\footnote{Equal contribution, alphabetical order.}}
\author{Vineet Nair\printfnsymbol{1}}
\author{Itay Eilat}
\author{Inbal Talgam-Cohen}
\author{Nir Rosenfeld}
\affil{Technion Israel Institute of Technology, Haifa, Israel}
\begin{document}
\maketitle

\begin{abstract}

Strategic classification studies the interaction between a classification rule and the strategic agents it governs. Under the assumption that the classifier is \emph{known}, rational agents respond to it by manipulating their features. However, in many real-life scenarios of high-stake classification (e.g., credit scoring), the classifier is not revealed to the agents, which leads agents to attempt to learn the classifier and game it too. In this paper we generalize the strategic classification model to such scenarios. We define the ``price of opacity'' as the difference in prediction error between opaque and transparent strategy-robust classifiers, characterize it, and give a sufficient condition for this price to be strictly positive, in which case transparency is the recommended policy. Our experiments show how Hardt et al.’s robust classifier is affected by keeping agents in the dark.
\end{abstract}

\section{Introduction}
\label{sec:intro}

The increasing role of machine learning in society has led to much recent interest in learning methods that explicitly consider the involvement of human agents. A canonical instance is \emph{strategic classification}---the study of classifiers that are robust to gaming by self-interested agents. This line of research was initiated by \cite{BrucknerS11,hardt2016strategic}, and has quickly amassed a multitude of insightful follow-up works (see below). 
As a running example, consider a firm (e.g., bank) classifying users (e.g., loan applicants) into two classes (e.g., ``approve'' or ``deny''), based on their attributes or features. Since applicants would like their request to be approved (regardless of their true solvency), they might be inclined to spend efforts on modifying their features to align with the bank's decision rule. This can lead to gaming behaviors like holding multiple credit cards, which have nothing to do with the true chances of paying back the loan. The goal of the strategic classification literature is to show when and how the firm can learn a robust classifier, which is guaranteed to perform well even when users strategically modify their features in response to it. But to be realistically applicable, and to ensure safe deployment in settings involving consequential decision-making, care must be taken as to what assumptions underlie the theoretical guarantees. 

Currently known results for strategic classification rely on a key assumption: that users have \emph{full knowledge of the classifier} and respond accordingly, or in other words, the classifier is completely transparent. Under this assumption, the strategic classification literature aims for classifiers with performance ``comparable to ones that rely on secrecy.’’ The literature thus treats the transparency/opaqueness of a classifier as a dichotomy, where only the former is assumed to require a strategy-robust learning process.
However, in many real-life scenarios of high-stake classification, the situation is more nuanced. This point was made already in the original strategic classification paper: ``\emph{Secrecy is not a robust solution to the problem; information about a classifier may leak, and it is often possible for an outsider to learn such information from classification outcomes}'' \cite{hardt2016strategic}. So often we have a hybrid situation – the users know something, not everything, about the classifier, and use their partial knowledge to strategically respond to it. This partial knowledge is a result of a second, smaller-scale learning process, this time of the users themselves, based on information aggregated by media platforms or by their own social network. As anecdotal evidence, consider the secret credit-scoring algorithms applied in the US and elsewhere, coupled with the plethora of online information on how to improve credit scores. In the case of SCHUFA, the German credit-scoring algorithm, there was even a recent crowdsourced effort in which a data donation platform gathered samples from the public
with the goal of ``reverse-engineering'' the secret algorithm (\url{https://openschufa.de/english/}).

The above scenarios demonstrate that rather than preventing gaming, opacity leads users to attempt to learn the classifier and game it too. How relevant is strategic classification to such hybrid situations? What policy recommendations can the theory make for firms whose classifier is being learned?
Our goal in this work is to adapt the strategic classification framework to such settings, in order to examine both theoretically and empirically how keeping users in the dark affects the guarantees of robust classifiers. We introduce the ``optimistic firm’’: It applies a strategy-robust classifier while keeping classification details proprietary. The firm is optimistic in the following sense---it assumes that while gaming on behalf of the users is with respect not to $f$ but to its learned counterpart $\fhat$, the robustness of $f$ still protects it from strategic behavior. We compare how well such a firm fares in comparison to the alternative of simply revealing the classifier. Is the firm justified in its optimistic adoption of strategic classification methods? Our results give a largely negative answer to this question.

{\bf Our Results.}
To address the above questions, we compare the prediction error of a robust classifier $f$ when the strategic users must learn a version of it $\fhat$, and when $f$ is transparent to the users. We term the difference between these errors ``Price of OPacity (POP)'' (Sec.~\ref{sub:POP}). Notice that whenever POP is $>0$, the policy implication is that the transparent policy prevails. 
Our main set of theoretical results (Sec.~\ref{sub:main}) shows the following: We show that even if the estimate $\fhat$ of $f$ is quite good, such that the population mass on which these classifiers disagree is small, these small errors can potentially be enlarged by the strategic behavior of the user population. Indeed, if the small differences in classification incentivize the users to modify their features differently under $f$ and $\fhat$, that means that a much larger mass of users may ultimately be classified differently. We call this enlarged population subset the ``enlargement region’’. 

From the users’ perspective, the enlargement region is undesirable: we show the population mass in this region will be classified negatively whereas under transparency it would have been classified positively (Thm.~\ref{thm:E_partition}). Thus, opaqueness harms those in the region who are truly qualified. From the firm’s perspective, the connection between the enlargement set and POP is not immediately clear – perhaps $\fhat$ is inadvertently fixing the classification errors of $f$, making POP negative? We show a sufficient condition on the mass of the enlargement set for POP to be positive (Thm.~\ref{thm:mainTheorem}). We demonstrate the usefulness of these results 
by analyzing a normally-distributed population classified linearly (Sec.~\ref{sub:linear}). In this setting, we show via a combination of theory and experiments that POP can become very large (Prop.~\ref{prop: POP necessary and sufficient}, Sec.~\ref{sec:experiments}). 
Finally, we formalize the intuition that the problem with keeping users in the dark is that the firm in effect is keeping \emph{itself} in the dark (as to how users will strategically react). We show in Sec.~\ref{sec:all powerful Jury} that if $\fhat$ can be anticipated, a natural generalization of strategic classification holds \citep{Zhang2020IncentiveAwarePL,sundaram2021paclearning}.

We complement our theoretical results with experiments on
synthetic data as well as on a large dataset of loan requests.
The results reinforce our theoretical findings,
showing that \POP\ can be quite large in practice.
We use the loans dataset to further explore the implications of an
opaque policy on users,
showing that it can disproportionately harm users
having few social connections.

{\bf More Related Work.}
%
There is a growing literature on the strategic classification model; to the best of our knowledge, this model has only been studied so far under the assumption of the classifier being known to the agent. 
\citet{DongRSWW18} study an online model where agents appear sequentially and the learner does not know the agents’ utility functions. \citet{CLP19} design efficient learning algorithms that are robust to manipulation in a ball of radius $\delta$ from agents’ real positions (see also \citet{ABBN19}). \citet{MilliMDH19} consider the the social impacts of strategic classification, and the tradeoffs between predictive accuracy and the social burden it imposes (a raised bar for agents who naturally are qualified). \citet{HuIV19} focus on a fairness objective and raise the issue that different populations of agents may have different manipulation costs. \citet{BravermanG20} study classifiers with randomness. \citet{KleinbergR19} study a variant of strategic classification where the agent can change the ground truth by investing effort. \citet{AlonDPTT20,HaghtalabILW20} generalize their setting to multiple agents. \citet{PZM20, BLWZ20} study causal reasoning in strategic classification. \citet{BLWZ20} find that strategic manipulation may help a learner recover the causal variables. 
\citet{RHSP20} provide a framework for learning predictors that are both accurate and that promote good actions.
\citet{scmp}
provide a practical, differentiable learning framework for 
learning in diverse strategic settings.


\ganesh{In concurrent and independent work, \cite{BechavodPZWZ21}  also consider the problem of strategic learning where the classifier is not revealed to the users. In their work, the users belong to distinct population subgroups from which they learn versions of the scoring rule and Jury is aware of these versions (close to our all-powerful Jury setting in Sec.~\ref{sec:all powerful Jury}). We view their work as complementary to ours in that they focus on incentivizing users to put in genuine efforts to self-improve, as advocated by~\cite{KleinbergR19}, rather than on robustness of the classifier to unwanted gaming efforts as in \citep{hardt2016strategic}. One finding of ours that is somewhat related is the difference in how opaque learning affects users who are more vs.~less socially well-connected.}



\section{Preliminaries and Our Learning Setup} 
\label{sec:prelims}

{\bf Classification.}
Let $x \in \X \subseteq \mathbb{R}^d$ denote a $d$-dimensional feature vector describing a user
(e.g., loan applicant), and let $D$ be a distribution over user population $\X$.
Label $y \in \Y = \{\pm 1\} $ is binary (e.g., loan returned or not),
and for simplicity we assume true labels are generated
by  (an unknown) ground-truth function $h(x)=y$.\footnote{The main results of our work in Sec.~\ref{sub:main} hold even when $D$ is a joint distribution over $\X\times \Y$.}
Given access to a sample set $T = \{(x_i,y_i)\}_{i=1}^n$ with
$x_i \iid D$, $y_i=h(x_i)$,
the standard goal in classification is to find a classifier $f:\X \rightarrow \Y$
from a class $\calH$ that predicts well on $D$.

{\bf Strategic Classification (~\cite{hardt2016strategic}).}
%
In this model,
users are assumed to know $f$, 
and to strategically and rationally manipulate their features in order to be classified positively by $f$.
Performance of a classifier $f$ is therefore evaluated on \emph{manipulated data}, in contrast to standard supervised learning, where train and test data
are assumed to be drawn from the same distribution.
In the remainder of this subsection we formally describe the setup of \citet{hardt2016strategic}.
Users gain $1$ from a positive classification and $-1$ from a negative one, and feature modification is costly. Every user $x$ modifies her features to maximize utility. Let:
\begin{equation} \label{eq:best_resp_argmin}
\Delta_f(x) = \argmax_{u \in \X} \{f(u) - c(x,u)\}
\end{equation}
be the utility maximizing feature vector of user $x$, where $c(\cdot,\cdot)$ is a publicly-known, non-negative \emph{cost function} quantifying the cost of feature modification from $x$ to $u$.%
\footnote{Ties are broken lexicographically.} 
It is convenient to think of the mapping $\Delta_f(\cdot)$ as ``moving'' points in $\X$-space.
The goal in strategic classification is to find~$f$ minimizing the \emph{strategic error}
%
$\err(f) := \mathbb{P}_{x\sim D}\{h(x) \neq f(\Delta_{f}(x))\},$
%
which is the probability of the classification error by $f$ when every user $x$ responds to $f$ by modifying her features to~$\Delta_{f}(x)$.



Strategic classification can be formulated as a Stackelberg game between two players:
\emph{Jury}, representing the learner,
and \emph{Contestant}, representing the user population. 
The game proceeds as follows:
Jury plays first and commits to a classifier~$f \in \calH$.
Contestant responds by $\Delta_f(\cdot)$, manipulating feature $x$ to modified feature $\Delta_{f}(x)$. 
%
The payoff to Jury is $1-\err(f)$.
%
To choose $f$, Jury can make use of the \emph{unmodified}
sample set $T=\{(x_i,y_i)\}_{i=1}^n$, as well as knowledge of $\Delta_f$
(i.e., Jury can anticipate how contestant will play for any choice of~$f$). 
The payoff to Contestant is the expected utility of a user $x$ sampled from $D$ and applying modification $\Delta_f(x)$, i.e.,
$\expect{x \sim D}{f(\Delta_f(x)) - c(x,\Delta_f(x))}$. Observe that Contestant's best response to Jury's choice~$f$ is thus $\Delta_f(\cdot)$ as defined in Eq.~\eqref{eq:best_resp_argmin}. We remark that Contestant's utility (and therefore $\Delta_f$) does not directly depend on~$h$. 
It will be convenient to rewrite $\Delta_f(\cdot)$ as follows:
denoting by $C_f(x) = \{ u\mid c(x,u)<2,  f(u)=1\}$
the set of ``feasible''
modifications for $x$ under~$f$ (where the condition that the cost $<2$ comes from $f(x)\in \{-1,1\}$),
\begin{equation*} \label{eq:best_resp}
\Delta_{f}(x) \coloneqq
  \begin{cases}
  \displaystyle \argmin_{u\in C_f(x)}c(x,u) & f(x)=-1 \land C_f(x) \ne \emptyset;\\
  x & \text{otherwise}.
  \end{cases}    
\end{equation*}
In words, $x$ ``moves'' to the lowest-cost feasible point in $C_f(x)$ rather than staying put
if $x$ is (i) classified negatively, and (ii) has a non-empty feasible set.

{\bf Strategic Classification in the Dark.}  
A key assumption made throughout the strategic classification literature
is that \emph{Contestant knows classifier $f$ exactly}, i.e., that $f$ is public knowledge. In this case, we say Jury implements a \emph{transparent policy}. Our goal in this paper is to explore the case where this assumption does not hold, keeping Contestant ``in the dark''. We refer to this as an \emph{opaque policy}.

We consider a \emph{two-sided statistical Stackelberg game},
where both Jury \emph{and} Contestant obtain information through samples.\footnote{We call the conventional strategic learning described above as a \emph{one-sided statistical Stackelberg game}.}
In particular, we assume that since Jury does not publish  $f$,
Contestant is coerced to estimate $f$ from data
available to her
(e.g., by observing the features of friends and
the associated predicted outcomes).
In our theoretical analysis we make this concrete by assuming that
Contestant observes $m$ samples also drawn iid from $D$ but
\emph{classified by $f$},
and denote this set $T_C = \{(x_i,f(x_i))\}_{i=1}^m$. 
In our experiments in Sec.~\ref{sec:experiments}  we consider more elaborate forms of information that Contestant may have access to.
Contestant uses $T_C$ to learn a classifier $\fapx \in \calH_C$
serving as an estimate of $f$,
which she then uses to respond by playing $\Delta_\fapx$.\footnote{Our results also hold in expectation, when the sample sets for the contestant are different across the users.} To allow $\fhat$ to be arbitrarily close to $f$
we will assume throughout that $\calH_C=\calH$ (otherwise the optimal error of $\fhat$
can be strictly positive). 
Intuitively, if $\fhat$ is a good estimate of $f$
(e.g., when $m$ is sufficiently large),
then we might expect things to proceed
as well as if Contestant had known $f$.
However, as we show in Sec. \ref{sec:jitd},
this is not always the case.

We have defined what Contestant knows, 
but to proceed,
we must be precise about what Jury
is assumed to know.
We see two natural alternatives:
(a)~Jury has full knowledge of~$\fhat$. We refer to this setting as 
the \emph{All-Powerful Jury}.
(b)~Jury is unaware of $\fhat$, which we refer to as \emph{Jury in the Dark}.
The latter is our primary setting of interest,
as we believe it to be more realistic and more relevant in terms of policy implications.
It is important to stress that Jury is in the dark
on account of his own decision to be opaque;
any Jury that chooses transparency becomes cognizant of
Contestant's strategy by definition.
We present an analysis of the Jury in the Dark setting  in Secs.~\ref{sec:jitd},\ref{sec:experiments}),
and return to the All-Powerful Jury in
Sec.~\ref{sec:all powerful Jury}.






\section{Jury in the Dark}\label{sec:jitd}


When Jury is in the dark, his knowledge of how Contestant will respond is incomplete,
but he must nonetheless commit to a strategy by choosing~$f$.
We analyze a Jury who assumes
Contestant will 
play~$\Delta_f$.
Indeed, if Jury believes that Contestant's estimated classifier $\fhat$ is a good approximation of~$f$,
then replacing the unobserved $\fhat$ with the known
and similar $f$ is a natural---if slightly optimistic---approach. 
 The optimistic Jury implicitly assumes that small discrepancies between $f$ and $\fhat$ will not harm his accuracy too much. 
This approach also has a practical appeal: 
In practice, many important classifiers are kept opaque, yet firms are becoming more aware to strategic behavior as well as to the possibility of information leakage.
 A firm interested in protecting itself by learning in a manner that is strategic-aware while remaining opaque may choose to apply one of the cutting-edge robust classifiers, and virtually all current methods for strategic classification assume Contestant plays $\Delta_f$.
%
Despite its appeal,
our results indicate that the above intuition can be misleading.
By carefully tracing how errors propagate,
we show that small discrepancies between $f$ and $\fhat$ can `blow up' the error 
in a way that leads to  complete failure of the optimistic approach.

\subsection{Price of Opacity and Related Definitions}
\label{sub:POP}

We are interested in studying the effects of
Jury committing to an opaque policy, as it compares
to a transparent policy.
To this end we extend the 
definition of predictive error: 
\begin{definition}
The strategic error when Jury plays $f$ and 
Contestant responds to  $\fhat$ is given by:
\begin{equation}\label{definition: error}
\err(f,\hat{f}) = \mathbb{P}_{x\sim D}\{h(x) \neq f(\Delta_{\hat{f}}(x))\}.
\end{equation}
\end{definition}
Note that 
$\err(f,f)=\err(f)$; we mostly use $\err(f,f)$ as a reminder
that in principle Jury and Contestant may use different classifiers. 
Our main quantity of interest is:
\begin{definition}[Price of Opacity] \label{def:poo}
When Jury plays $f$ and Contestant responds to  $\fhat$,
the \textbf{Price of Opacity} (\POP) equals to
$\err(f,\fhat) - \err(f,f).$  
\end{definition}
The price of opacity describes the relative loss in accuracy
an optimistic Jury suffers by being opaque, i.e., holding $f$ private.
Note that \POP\ implicitly depends on $h$. Our main results in Sec. \ref{sub:main} show that \POP\ can be strictly positive,
and in some cases, quite large. 

{\bf Enlargement Set. }The two terms in Def.~\ref{def:poo} 
differ in the way Contestant plays---either using $f$ or using $\fhat$---and any difference in the errors can be attributed
to discrepancies caused by this difference.
This provides us a handle to trace the opaque strategic  error and determine price of opacity.
Define the \emph{disagreement region} of $f$ and $\fhat$ as (see Fig.~\ref{fig:E}):
\begin{equation} \label{eq:disagreement_region}
S := \{x \mid f(x) \neq \fhat(x)\}.
\end{equation}
Disagreement stems from errors in Contestant's estimation of $f$.
Such errors become significant if they affect if and how points move
(i.e., $\Delta_f(x)$ vs. $\Delta_\fhat(x)$),
as these differences can lead to classification errors for Contestant.
We refer to the set of all such points as the \emph{enlargement set}: 
\begin{equation} \label{eq:enlargement_set}
E := \{x \mid f(\Delta_f(x)) \neq  f(\Delta_{\fhat}(x))\}
\end{equation}
The enlargement set $E$ includes all points $x$
which Jury classifies differently under the different 
strategies of Contestant (see Fig.~\ref{fig:E}).
In particular, $E$ tells us which points \emph{prior to moving} will be disagreed on
\emph{after moving}. Because all moved points tend to concentrate around the
classification boundary (since they maximize utility),
we think of $E$ as `enlarging' $S$ by including  points that strategically moved to  $S$. 
This entails a subtle but important point:
even if $\fhat$ is a good approximation of $f$
and the mass on $S$ is small,
the mass of $E$, which includes
points that \emph{map to $S$} via $\Delta_f$,
can be large. In Sec.~\ref{sub:main} we characterize the points in $E$. 

{\bf Errors $\epsilon_1,\epsilon_2$ in Strategic Learning.}
Let $\epsilon_1,\epsilon_2$ be such that the learning error 
for Jury is:
$\err(f,f)  = \err(\fstar,\fstar) + \epsilon_1,$
where $\fstar = \argmin_{f \in \calH} \err(f,f)$
is the optimal strategy-aware classifier, and the learning error for the Contestant is:  
$\pr_{x \in D} \{ x\in S\} = \epsilon_2.$
In the next sections we use $\epsilon_1$ and $\epsilon_2$
to reason about \POP.
These errors will tend to zero (with rates polynomial in the number of samples):
for $\epsilon_1$ when $f$ is strategically-PAC learnable (see \cite{Zhang2020IncentiveAwarePL,sundaram2021paclearning}),
and for $\epsilon_2$ when $\fhat$ is learnable
in the standard, non-strategic PAC sense (recall that $\calH_{C}=\calH$).

{\bf Relations Between Errors.}
We note that
the relations between $\err(f,\fhat)$, $\err(f,f)$,
and $\err(\fstar,\fstar)$ are not immediate.
First, we observe that 
there can be instances in which 
$\err(f,\fhat) < \err(\fstar,\fstar)$,
i.e,  the utility gained by a \naive\ and opaque Jury
could be \emph{higher} than that of the optimal transparent Jury.
Second, we note that $\POP$ can be \emph{negative}
as there exist instances where $\err(f,\fhat) < \err(f,f)$. We demonstrate this via experiments in App.  \ref{apndx:experiments}.

Intuitively, we expect that if $n$ is large
($\epsilon_1$ is small)
and $m$ is small ($\epsilon_2$ can be large),
then estimation errors of $\fhat$ would
harm Jury's ability to predict Contestant's moves,
so much that it would lead to a strictly positive \POP.
This is indeed the intuition we build on for our next
result which determines when $\POP>0$. 
Additionally, even if $\epsilon_2$ is small, 
the probability on $E$ can be large.  

\subsection{Main Results}
\label{sub:main}

 We partition the points in $E$ depending on how $h$ behaves on it.
 This partition is used to first determine \POP\  and then give a sufficient condition for positive \POP\ (see Thm.  \ref{thm:mainTheorem}). Next, in Thm.~\ref{thm:E_partition}\,   we give an exact characterization of  $E$ in terms of  $S$, $f$, $\widehat{f}$ (independent of $h$).   Together, these results are
 useful in analysis, when one needs to assess whether the price of opacity exists  under a reasonable distribution model $D$, as we demonstrate in Sec.~\ref{sub:linear}. 

{\bf Partition of $E$.} The points in $E$ can be partitioned as:
\begin{align*}
    E^{+} &=  \{x \mid h(x)=f(\Delta_{f}(x)) \ \land \ h(x) \neq f(\Delta_{\fhat}(x)) \}; \\
    E^{-} &=  \{x \mid h(x) \neq f(\Delta_{f}(x)) \ \land \ h(x) =  f(\Delta_{\fhat}(x)) \}.
\end{align*}
To make concrete the relation between \POP, $E$ and~$h$ define:
\begin{equation*}
\POP^{+} := \mathbb{P}_{x\sim D}\{x\in E^{+}\}, \,\,\,\,
\POP^{-} := \mathbb{P}_{x\sim D}\{x \in E^{-}\}.
\end{equation*}
Notice that, in Fig.~\ref{fig:E}, the points in region $\POP^{+}$ increase the price of opacity, and the points in region $\POP^{-}$ decrease the price of opacity. Further, note that since $E^{+}$ and $E^{-}$ form a disjoint partition of $E$, we have  
\begin{equation}
        \mathbb{P}_{x\sim D}\{x\in E\} = \POP^{+} +  \POP^{-}. \label{equation: first characterization of E}
\end{equation}
The next lemma follows from the definition of \POP. 
\begin{lemma}\label{lem: characterization of POP}
$ \POP = \POP^{+}  -   \POP^{-} \label{equation: relation of pop pop+ pop-}.$
\end{lemma}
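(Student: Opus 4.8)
The plan is to expand both error terms from Definition~\ref{def:poo} as expectations of indicator random variables over $D$ and reduce the identity to a pointwise statement. First I would write, using Eq.~\eqref{definition: error},
\[
\POP = \err(f,\fhat) - \err(f,f) = \expect{x\sim D}{\1{h(x)\neq f(\Delta_{\fhat}(x))} - \1{h(x)\neq f(\Delta_{f}(x))}},
\]
where the second equality uses linearity of expectation to combine the two probabilities into a single expectation of the difference of indicators. It then suffices to understand the integrand $g(x) := \1{h(x)\neq f(\Delta_{\fhat}(x))} - \1{h(x)\neq f(\Delta_{f}(x))}$ pointwise for each fixed $x$.

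Next I would carry out a four-way case analysis on the two binary events ``$h(x) = f(\Delta_f(x))$'' and ``$h(x) = f(\Delta_{\fhat}(x))$''. The random variable $g(x)$ takes values in $\{-1,0,1\}$, and the key observation is that $g(x)=+1$ exactly when $x$ is correctly classified under the $f$-response but incorrectly under the $\fhat$-response, i.e.\ when $h(x)=f(\Delta_f(x))$ and $h(x)\neq f(\Delta_{\fhat}(x))$ --- which is precisely the defining condition of $E^{+}$. Symmetrically, $g(x)=-1$ exactly on $E^{-}$. In the two remaining cases (both responses agree with $h$, or both disagree with $h$) the two indicators coincide and $g(x)=0$. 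Thus $g(x) = \1{x\in E^{+}} - \1{x\in E^{-}}$ as functions on $\X$.

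Finally I would take expectations of this pointwise identity, obtaining
\[
\POP = \expect{x\sim D}{\1{x\in E^{+}}} - \expect{x\sim D}{\1{x\in E^{-}}} = \prob{x\sim D}{x\in E^{+}} - \prob{x\sim D}{x\in E^{-}} = \POP^{+} - \POP^{-},
\]
which is the claim. The proof is essentially bookkeeping, so I do not expect a genuine obstacle; the only point requiring care is verifying that the four-way case split is exhaustive and that the two ``agreement'' cases contribute nothing, together with the fact (already noted before Eq.~\eqref{equation: first characterization of E}) that $E^{+}$ and $E^{-}$ are disjoint, so the two indicators never fire simultaneously and no point is double-counted.
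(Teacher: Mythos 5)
Your proof is correct and is precisely the argument the paper intends: the paper offers no explicit proof, stating only that the lemma ``follows from the definition of \POP,'' and your indicator-function expansion with the four-way case analysis is the natural fleshing-out of that claim. The pointwise identity $\1{h(x)\neq f(\Delta_{\fhat}(x))} - \1{h(x)\neq f(\Delta_{f}(x))} = \1{x\in E^{+}} - \1{x\in E^{-}}$, followed by taking expectations, is exactly the intended bookkeeping.
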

\ganesh{
Note that since \POP\ is a difference between two
probabilities, its range is $\POP \in [-1,1]$.
} 

\begin{figure*}
    \centering
    \begin{subfigure}{ }
    \includegraphics[width=0.23\textwidth]{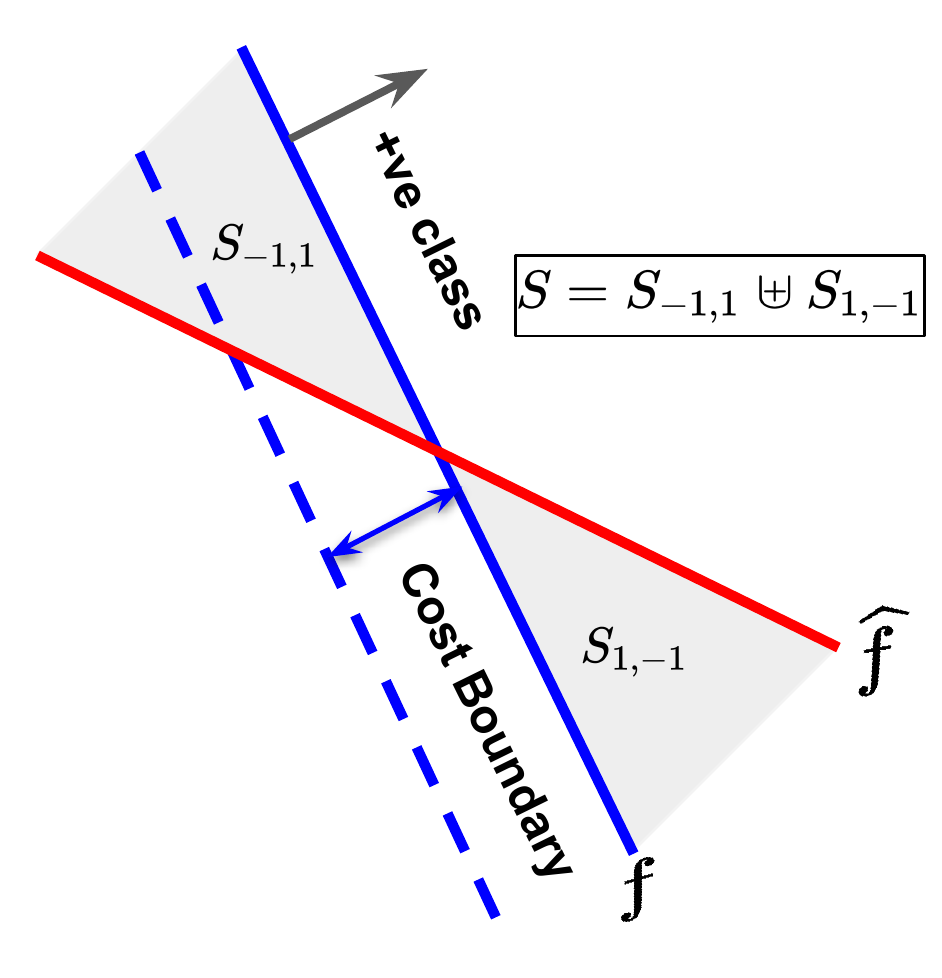}
    \end{subfigure}
    \,
    \begin{subfigure}{ }
    \includegraphics[width=0.23\textwidth]{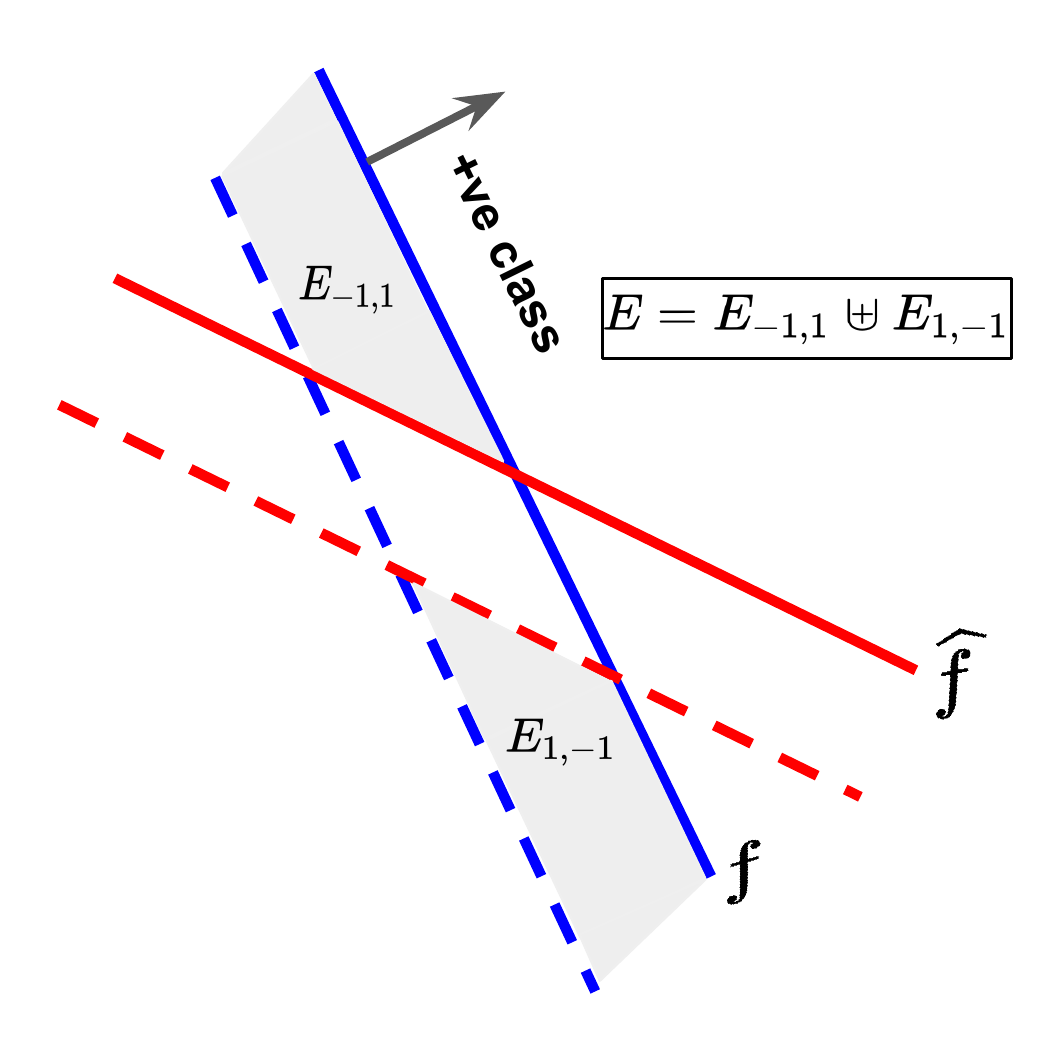}
    \end{subfigure}
    \,
    \begin{subfigure}{ }
    \includegraphics[width=0.23\textwidth]{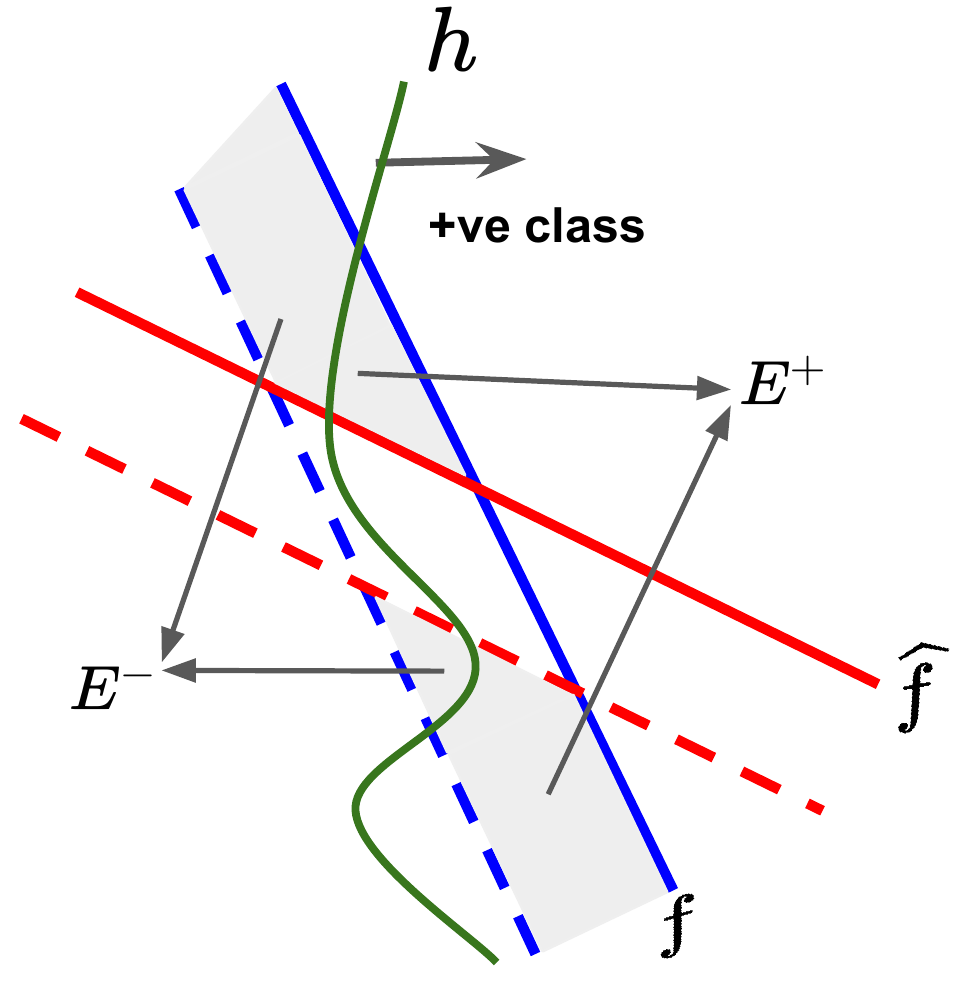}
    \end{subfigure}
        \caption{\textbf{(Left)} The disagreement set $S$ between $f$ and $\fhat$, and the cost boundary of $f$.
        \textbf{(Center)} The enlargement set $E$ derived from $S$.
        \textbf{(Right)} The partition $E^+$ and $E^{-}$ of $E$. Note the following: a) $E$ is a subset of the cost boundary of $f$, and b) $E^{+}$ and $E^{-}$ depend on $h$.}
    \label{fig:E}
\end{figure*}

We can now state our main result, proved using Lem. \ref{lem: characterization of POP}.
The theorem provides a sufficient condition for $\POP>0$:
if the probability mass on $E$ exceeds the base
error rates of $f$ and $\fstar$,
then \POP\ is strictly positive.
This provides a handle for reasoning about the price of opacity
through the analysis of the enlargement set $E$.
We remark that even though \POP\ depends on the behaviour of $h$ on $E$, it is interesting that the sufficiency condition in Thm. \ref{thm:mainTheorem} depends only on $f$ and $\fhat$ which determine $E$. We also show in Sec. \ref{sub:linear} (see Cor. \ref{corollary: POP >0}) that there are instances where the sufficiency condition in the theorem below is indeed  necessary.
\begin{restatable}{theorem}{mainThm}
\label{thm:mainTheorem}
The price of opacity is strictly positive, i.e., $\POP > 0$, if the following condition holds:
\begin{equation}
\mathbb{P}_{x\sim D}\{x\in E\} > 2\err(f^{\star},f^{\star}) + 2 \epsilon_1.
\end{equation}
\end{restatable}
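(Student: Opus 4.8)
The plan is to leverage the characterization $\POP = \POP^{+} - \POP^{-}$ from Lemma~\ref{lem: characterization of POP} together with the partition identity $\mathbb{P}_{x\sim D}\{x\in E\} = \POP^{+} + \POP^{-}$ in Eq.~\eqref{equation: first characterization of E}. Substituting $\POP^{+} = \mathbb{P}_{x\sim D}\{x\in E\} - \POP^{-}$ into the first relation, I obtain
\begin{equation*}
\POP = \mathbb{P}_{x\sim D}\{x\in E\} - 2\,\POP^{-}.
\end{equation*}
Thus proving $\POP > 0$ reduces to showing that $2\,\POP^{-}$ is strictly smaller than the probability mass on $E$, so all the work lies in upper bounding $\POP^{-}$.

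The key observation---and the crux of the argument---is that every point in $E^{-}$ is a point on which the transparent policy already errs. Indeed, by definition $E^{-} = \{x \mid h(x) \neq f(\Delta_{f}(x)) \ \land \ h(x) = f(\Delta_{\fhat}(x))\}$, so its first conjunct gives the inclusion $E^{-} \subseteq \{x \mid h(x) \neq f(\Delta_{f}(x))\}$. Taking probabilities under $D$ and recalling that $\err(f,f) = \mathbb{P}_{x\sim D}\{h(x) \neq f(\Delta_{f}(x))\}$, this yields $\POP^{-} = \mathbb{P}_{x\sim D}\{x\in E^{-}\} \leq \err(f,f)$.

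It remains to combine these pieces. Using $\err(f,f) = \err(\fstar,\fstar) + \epsilon_1$ from the definition of $\epsilon_1$, the bound becomes $\POP^{-} \le \err(\fstar,\fstar) + \epsilon_1$. Plugging this into the displayed identity gives
\begin{equation*}
\POP = \mathbb{P}_{x\sim D}\{x\in E\} - 2\,\POP^{-} \geq \mathbb{P}_{x\sim D}\{x\in E\} - 2\big(\err(\fstar,\fstar) + \epsilon_1\big),
\end{equation*}
and the hypothesis $\mathbb{P}_{x\sim D}\{x\in E\} > 2\err(\fstar,\fstar) + 2\epsilon_1$ renders the right-hand side strictly positive, completing the argument.

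I expect no genuine obstacle here: the only nontrivial step is recognizing the set inclusion $E^{-} \subseteq \{x \mid h(x)\neq f(\Delta_{f}(x))\}$, after which everything is bookkeeping. The one subtlety worth emphasizing is that this inclusion is precisely what converts the $h$-dependent quantity $\POP^{-}$ into the $h$-\emph{independent} base error $\err(f,f)$; this is the reason the final sufficient condition can be stated purely in terms of the mass on $E$ (hence in terms of $f$ and $\fhat$) without any reference to the behaviour of $h$ on $E$, as remarked before the theorem.
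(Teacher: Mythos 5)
Your proof is correct and is essentially identical to the paper's own argument: both combine Lemma~\ref{lem: characterization of POP} with Eq.~\eqref{equation: first characterization of E} to get $\POP = \mathbb{P}_{x\sim D}\{x\in E\} - 2\POP^{-}$, then bound $\POP^{-} \leq \err(f,f) = \err(\fstar,\fstar) + \epsilon_1$ via the inclusion $E^{-} \subseteq \{x \mid h(x) \neq f(\Delta_f(x))\}$. Your write-up merely makes explicit the set inclusion that the paper states in words (``points in $E^{-}$ contribute to the error of $f$''), so there is nothing to add.
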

The mass of the enlargement set is potentially large if the distribution $D$ has a large mass on the ``cost boundary'' of $f$ (see Fig.~\ref{fig:E}).
Intuitively, enlargement can cause a ``blow-up'' in  \POP\
because both learning objectives---of $f$ by Jury,
and of $\fhat$ by Contestant---are oblivious to the significance of this mass under strategic behavior.

{\bf Characterizing the Enlargement Set $E$.}
We  identify the points in the enlargement set~$E$ when only  $f$ and $\fhat$ are known without invoking an explicit dependence on the (often unknown) ground truth function $h$. 
First partition $S$~as:
\begin{align*}
S_{-1, 1} &= \{x \mid f(x)=-1 \land \fhat(x) = 1\};\\
S_{1, -1} &= \{x \mid  f(x)=1 \land \fhat(x) = -1\}.    
\end{align*}
Define the following sets derived from $S_{-1, 1}$ and~$S_{1, -1}$:
\begin{align*}
E_{-1,1} &= \{x \mid \Delta_{\fhat}(x)\in S_{-1,1},~ f(\Delta_{f}(x)) =1\};\\
E_{1,-1} &= \{x \mid \Delta_{f}(x)\in S_{1,-1}\setminus \{x\},~ \Delta_{\fhat}(x)=x \}.
\end{align*}
In Thm.  \ref{thm:E_partition} (proof in App.  \ref{apndx:missing proofs}) we show that $E_{-1,1}$ and $E_{1,-1}$ partitions $E$. We remark that this partition is different from $E^{+}$ and $E^{-}$ as defined before (see Fig.~\ref{fig:E}).
\begin{restatable}{theorem}{ECharacterization}
\label{thm:E_partition}
The enlargement set can be partitioned as
$E = E_{-1,1} \uplus E_{1,-1}$.
\end{restatable}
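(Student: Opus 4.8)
The plan is to reduce the statement to a short case analysis driven by the observation that both $f(\Delta_f(x))$ and $f(\Delta_{\fhat}(x))$ take values in $\{-1,+1\}$. Since $x\in E$ exactly when these two values differ, every $x\in E$ satisfies exactly one of (I)~$f(\Delta_f(x))=1$ and $f(\Delta_{\fhat}(x))=-1$, or (II)~$f(\Delta_f(x))=-1$ and $f(\Delta_{\fhat}(x))=1$. The whole theorem then amounts to showing that (II) is vacuous and that the points of (I) split cleanly into $E_{-1,1}$ and $E_{1,-1}$.

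First I would dispose of case (II). If $f(\Delta_f(x))=-1$, then reading off the best-response definition $x$ relocated to no $f$-positive point, i.e.\ $f(x)=-1$ and $C_f(x)=\emptyset$. But $f(\Delta_{\fhat}(x))=1$ forces $w:=\Delta_{\fhat}(x)$ to satisfy $f(w)=1$ with $c(x,w)<2$: either $w=x$, whence $f(x)=1$ and $x\in C_f(x)$, or $w\neq x$ is $\fhat$-feasible and hence $c(x,w)<2$, so again $w\in C_f(x)$. This contradicts $C_f(x)=\emptyset$, so (II) is empty and $E$ equals the set of points satisfying~(I).

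Next I would assign each point of (I) by splitting on whether Contestant actually relocates under $\fhat$. If $\Delta_{\fhat}(x)\neq x$, the moved point is $\fhat$-positive while (I) makes it $f$-negative, so $\Delta_{\fhat}(x)\in S_{-1,1}$; together with $f(\Delta_f(x))=1$ this is membership in $E_{-1,1}$. If $\Delta_{\fhat}(x)=x$, then $f(x)=-1$, and since $f(\Delta_f(x))=1$ the point relocates under $f$ to some $u=\Delta_f(x)\neq x$ with $f(u)=1$; when $\fhat(x)=-1$ the emptiness of $C_{\fhat}(x)$ (forced by $\Delta_{\fhat}(x)=x$) gives $\fhat(u)=-1$ because $c(x,u)<2$, so $u\in S_{1,-1}$ and $x\in E_{1,-1}$, while when $\fhat(x)=1$ we have $x\in S_{-1,1}$ and $x\in E_{-1,1}$. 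The reverse inclusions $E_{-1,1},E_{1,-1}\subseteq E$ are immediate, since each definition pins $f(\Delta_f(x))=1$ and $f(\Delta_{\fhat}(x))=-1$, so the union covers $E$.

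The step I expect to be the real obstacle is \emph{disjointness}, which the symbol $\uplus$ demands on top of coverage. The delicate configuration is a point $x\in S_{-1,1}$: here $\fhat(x)=1$ already forces $\Delta_{\fhat}(x)=x\in S_{-1,1}$, so $x\in E_{-1,1}$, yet if the $f$-best-response relocates $x$ into $S_{1,-1}$, i.e.\ $\Delta_f(x)\in S_{1,-1}\setminus\{x\}$, then the defining conditions of $E_{1,-1}$ hold as well. Establishing $E_{-1,1}\cap E_{1,-1}=\emptyset$ thus hinges on ruling out (or unambiguously assigning) such ``double-crossing'' points, and I would attack it by arguing from the minimality of $\Delta_f$ together with the cost budget $c(\cdot,\cdot)<2$ to examine whether a point of $S_{-1,1}$ can have its cheapest $f$-positive target land in $S_{1,-1}$; this is the part of the argument that requires the most care and where any structural assumptions on $f,\fhat$ would have to be invoked.
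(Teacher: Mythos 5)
Your coverage argument is correct, and it is essentially the paper's own proof in a cleaner arrangement. The paper runs a three-way case analysis on whether $y=\Delta_f(x)$ and $z=\Delta_{\fhat}(x)$ equal $x$; you instead first rule out the sign pattern $f(\Delta_f(x))=-1,\ f(\Delta_{\fhat}(x))=1$ (which the paper states separately as \cref{corollary: the sign of points in the disagreement set}, derived there as a consequence of the theorem rather than as a first step) and then split on whether $\Delta_{\fhat}(x)=x$. Both arguments rest on the same two mechanisms: a point that moves lands on a positively-classified, cost-feasible target; and a point that stays put despite being classified negatively certifies that its feasible set is empty, which transfers sign information between $f$ and $\fhat$. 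Your sub-cases map onto the paper's cases a--c exactly, and the reverse inclusions are handled the same way in both.

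On disjointness, your suspicion is exactly right, and you should know that the paper does not resolve it either: its proof establishes only the two containments, i.e.\ $E=E_{-1,1}\cup E_{1,-1}$, and never argues that the union is disjoint. In fact, the ``double-crossing'' configuration you isolate is realizable, so disjointness genuinely fails without further assumptions. Take $\X=\R^2$ with Euclidean cost (admissible in the sense of \cref{def:admissible_cost}), $f(x)=1$ iff $x_1\ge 1$, $\fhat(x)=1$ iff $x_1\le 1/2$, and $x=(0,0)$. Then $\fhat(x)=1$ forces $\Delta_{\fhat}(x)=x$, and $x\in S_{-1,1}$, so $x\in E_{-1,1}$; but also $\Delta_f(x)=(1,0)$, which lies in $S_{1,-1}\setminus\{x\}$ because $\fhat((1,0))=-1$, so $x\in E_{1,-1}$ as well. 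So there is no hidden argument from the minimality of $\Delta_f$ waiting to be found---minimality does not prevent the cheapest $f$-positive target from being $\fhat$-negative---and the $\uplus$ in the statement is an overclaim. What saves the paper is that nothing downstream uses disjointness: \cref{corollary: the sign of points in the disagreement set} and the computations of the mass of $E$ need only the union, and in the one-dimensional, equally-oriented threshold setting of \cref{sub:linear} one of $S_{-1,1}$, $S_{1,-1}$ is empty, so the partition claim holds there vacuously. Your proof therefore matches what the paper actually proves; the step you flagged as the ``real obstacle'' is not provable in general.
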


One consequence of Thm.~\ref{thm:E_partition} is that while a transparent Jury would have classified  users in $E$ positively,
an opaque Jury classifies them negatively. This disproportionately affects qualified users.  
\begin{corollary}
\label{corollary: the sign of points in the disagreement set}
For all $x\in E$, it holds that
$f(\Delta_f(x)) = 1 \text{ and } f(\Delta_{\fhat}(x)) = -1.$
\end{corollary}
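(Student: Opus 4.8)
The plan is to leverage the partition $E = E_{-1,1} \uplus E_{1,-1}$ established in Theorem~\ref{thm:E_partition}, and to verify the two claimed equalities separately on each block. Note first that for every $x \in E$ the labels $f(\Delta_f(x))$ and $f(\Delta_{\fhat}(x))$ disagree by the very definition of $E$, and both lie in $\{\pm 1\}$. Hence the corollary is equivalent to ruling out the case $f(\Delta_f(x)) = -1$ and $f(\Delta_{\fhat}(x)) = 1$; equivalently, it suffices to show $f(\Delta_f(x)) = 1$ on all of $E$, after which the disagreement forces $f(\Delta_{\fhat}(x)) = -1$.

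First I would treat $x \in E_{-1,1}$. Here the defining condition of $E_{-1,1}$ already supplies $f(\Delta_f(x)) = 1$ directly, and the condition $\Delta_{\fhat}(x) \in S_{-1,1}$ lets me read off $f(\Delta_{\fhat}(x)) = -1$ from the definition of $S_{-1,1}$. Thus both required equalities are immediate by unwinding the definitions of the partition block.

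Next I would treat $x \in E_{1,-1}$. The defining condition gives $\Delta_f(x) \in S_{1,-1}\setminus\{x\}$, so $f(\Delta_f(x)) = 1$ by the definition of $S_{1,-1}$, and it gives $\Delta_{\fhat}(x) = x$, so $f(\Delta_{\fhat}(x)) = f(x)$. It then remains to argue $f(x) = -1$. Since $\Delta_f(x) \neq x$, the point $x$ genuinely moved under the best response $\Delta_f$, and by the best-response rule a point moves only when it is classified negatively (and has a non-empty feasible set). Hence $f(x) = -1$, which yields $f(\Delta_{\fhat}(x)) = -1$ as needed.

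The main---and essentially only---obstacle is this last step in the $E_{1,-1}$ block, where one must invoke the structural property of $\Delta_f$ (points stay put unless classified as $-1$) rather than reasoning purely from the set inclusions. Everything else follows by directly unwinding the definitions of the two partition blocks, so the corollary drops out as a direct consequence of Theorem~\ref{thm:E_partition}.
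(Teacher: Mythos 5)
Your proposal is correct and follows essentially the same route as the paper: the paper derives the corollary from the last paragraph of its proof of Theorem~\ref{thm:E_partition}, performing the same case analysis over the partition $E = E_{-1,1} \uplus E_{1,-1}$ and reading off the signs from the definitions of $S_{-1,1}$, $S_{1,-1}$, and the partition blocks. Your only addition is to spell out why $f(x) = -1$ in the $E_{1,-1}$ case (a moved point under $\Delta_f$ must have been classified negatively), a detail the paper elides with ``similarly,'' so your write-up is if anything slightly more complete.
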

\subsection{Price of Opacity for Linear Classifiers}
\label{sub:linear}
In this section, we use results from Sec.~\ref{sub:main}\ to exemplify how the strategic behaviour by Contestant according to her learnt classifier $\fhat$ (instead of $f$) can harm Jury and lead to positive \POP. Throughout this section, we make the simplifying assumption that $d=1$ and that both Jury and Contestant play linear (threshold) classifiers, so as to make the computations simpler to follow and highlight Contestant's strategic behaviour. Our main findings on \POP\ for linear classifiers, Gaussian distributions over $\X$, and a large class of cost functions (as in Def.  \ref{def:admissible_cost}) are the following.
First, using  Thm.~\ref{thm:E_partition} and Cor.~\ref{corollary: the sign of points in the disagreement set} we determine $E$ (applying knowledge of $f$ and $\fhat$), and use Thm.~\ref{thm:mainTheorem} to claim that $\POP\ >0$ is inevitable (for any $h$) if the probability mass on $E$ is large (Prop.~\ref{prop: POP linear classifiers sufficiency condition}).  In  Prop.~\ref{prop: POP necessary and sufficient} we show the realizablility of $h$ enables us to use Lem.~\ref{lem: characterization of POP} to determine the expression for \POP\ in this setting.\footnote{Note that such knowledge regarding $h$ is necessary 
as \POP\ depends on the behaviour of $h$ on $E$.} Next, we show that the sufficiency condition in Thm.~\ref{thm:mainTheorem} is tight by giving a class of $h$ (realizable) for which this condition is also necessary (Cor.~\ref{corollary: POP >0}).  This establishes that without  further assumptions on $h$ it is impossible to derive a better condition for $\POP>0$. Finally in Cor.~\ref{cor: POP  linear classifiers}, with explicit knowledge of $h$ we show \POP\ can be made arbitrarily
large (i.e., close to 1)  with
 an appropriate choice of distribution parameters.

Our results here apply to cost functions that are \emph{instance-invariant}
\citep{sundaram2021paclearning},\footnote{Our focus on instance-invariant costs
is due to a result from \citet{sundaram2021paclearning}
showing that \emph{non}-invaraint costs (referred to as\emph{instance-wise} costs)
are intractable even in simple cases.
For example, the Strategic-VC of linear classifiers
is linear for instance-invariant costs
but unbounded for instance-wise costs.}
with an additional feasibility requirement
which ensures that each user has a feasible modification.
We refer to these as \emph{admissible costs}.
%
\begin{definition}[Admissible Cost]
\label{def:admissible_cost}
A cost function $c: \X \times \X \rightarrow \mathds{R}_{+}$ is \emph{admissible} if:
\newline 
\-\ (i) $c(x, x+ \delta) = c(y, y+\delta)$ for all $x,y,\delta \in \X$, and
\newline 
\-\ (ii) for all  $x\in \X$ there exists $\delta >0 $ s.t. $c(x, x+\delta) \leq 2$
\end{definition}
In the above, condition (i) is equivalent to instance-invariance,
and condition (ii) ensures feasibility.

\vineet{
Let $c$ be an admissible cost function $c$,
and define $t := \sup_{\delta> 0}\{c(x,x+\delta) \leq 2\}$.
Let $D= \mathcal{N}(\alpha,\sigma)$ be a Gaussian distribution over $\X$ with $\Phi_{\sigma}(\cdot)$ as its CDF.
}
The thresholds corresponding to  Jury and  Contestant are $t_f$ and $t_{\fhat}$ respectively,
\ganesh{
i.e., Jury’s classifier is $f(x)=\sign(x \geq  t_f)$
and Contestant's classifier is $\widehat{f}(x)=\sign(x \geq  t_{\widehat{f}})$.
}
We assume without loss of generality that $\epsilon_2 > 0$ and $t_f \neq t_{\fhat}$. In Prop.~\ref{prop: POP necessary and sufficient} and Cor.~\ref{corollary: POP >0}, when we say $h$ is realizable, we assume that $h(x) =1$ for $x\geq \alpha$ and $-1$ otherwise.
We now present the first result which gives the sufficient condition for
$\POP>0$. In Prop.~\ref{prop: POP linear classifiers sufficiency condition} let $\ell:= 2 \cdot \err(\fstar,\fstar) + 2\epsilon_1 $.
\begin{restatable}{proposition}{proptwo}
\label{prop: POP linear classifiers sufficiency condition} 
The following are sufficient for $\POP>0$:\\
(a) \,\, $\Phi_{\sigma}(t_f) -  \Phi_{\sigma}(t_f-t)  > \ell$
 when $t_f > t_{\fhat}$\\
(b) \,\, $\Phi_{\sigma}(t_{\fhat}-t) -  \Phi_{\sigma}(t_f-t)  >  \ell$
when $t_f < t_{\fhat}$
\end{restatable}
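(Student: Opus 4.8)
The plan is to reduce the claim to a direct computation of the mass $\pr_{x\sim D}\{x\in E\}$ of the enlargement set and then invoke \cref{thm:mainTheorem}. Since $\ell = 2\err(\fstar,\fstar) + 2\epsilon_1$, that theorem says $\POP>0$ as soon as $\pr_{x\sim D}\{x\in E\} > \ell$. Hence it suffices to show that in case~(a) the mass of $E$ equals $\Phi_{\sigma}(t_f) - \Phi_{\sigma}(t_f - t)$, and in case~(b) it equals $\Phi_{\sigma}(t_{\fhat}-t) - \Phi_{\sigma}(t_f - t)$; the two displayed strict inequalities then read exactly as $\pr_{x\sim D}\{x\in E\} > \ell$.

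First I would make the one-dimensional best responses explicit. Because $c$ is admissible and instance-invariant, a point $x$ can drop its cost below $2$ only by moving rightward, and it can afford to reach any target within distance $t$. So for a threshold classifier with threshold $\tau$, a negatively-classified $x$ moves exactly to $\tau$ when $\tau - t \le x < \tau$ and stays put otherwise. Applying this to $f$ (threshold $t_f$) gives the $h$-independent fact $f(\Delta_f(x)) = 1 \iff x \ge t_f - t$, while $f(\Delta_{\fhat}(x))$ depends on where $\Delta_{\fhat}(x)$ lands relative to $t_f$. The key structural input is \cref{corollary: the sign of points in the disagreement set}: on $E$ we always have $f(\Delta_f(x)) = 1$ and $f(\Delta_{\fhat}(x)) = -1$, so $E = \{x : x \ge t_f - t\} \cap \{x : f(\Delta_{\fhat}(x)) = -1\}$, and the whole task is to identify the second set.

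Next I would treat the two cases. In case~(a), $t_f > t_{\fhat}$, so Contestant under-estimates the bar: every point she moves lands at $t_{\fhat} < t_f$ (negative under $f$), and every unmoved point is positive under $f$ only if $x \ge t_f$; this yields $f(\Delta_{\fhat}(x)) = -1 \iff x < t_f$, hence $E = [t_f - t,\, t_f)$ and $\pr_{x\sim D}\{x\in E\} = \Phi_{\sigma}(t_f) - \Phi_{\sigma}(t_f - t)$, matching~(a). In case~(b), $t_f < t_{\fhat}$, so Contestant over-estimates the bar: a point in $[t_f - t, t_f)$ tries to reach $t_{\fhat}$, succeeds precisely when $x \ge t_{\fhat}-t$ (and is then positive under $f$, so leaves $E$), and otherwise stays put and stays negative; this gives $E = [t_f - t,\, t_{\fhat}-t)$ and the mass $\Phi_{\sigma}(t_{\fhat}-t) - \Phi_{\sigma}(t_f - t)$, matching~(b). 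Taking Gaussian measures and invoking \cref{thm:mainTheorem} then finishes both cases.

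The main obstacle is the case~(b) accounting for $\Delta_{\fhat}$: one must separate the points in $[t_f - t, t_f)$ that can afford to reach the over-estimated threshold $t_{\fhat}$ from those that cannot, and verify that the former end up on the positive side of $f$ and thus exit $E$, while also checking that the points in $[t_f, t_{\fhat})$ — which Contestant \emph{wastefully} tries to move — never enter $E$. The clean identity $E = [t_f - t,\, t_{\fhat}-t)$ is the place where the admissible-cost structure is used most delicately, since it presumes Contestant's over-estimation is within a single move-length (so that $t_{\fhat}-t < t_f$), the regime guaranteed when $\epsilon_2$ is small; isolating exactly which strategically-moved points land in the disagreement region $S$ — equivalently, confirming the claimed partition of $E$ from \cref{thm:E_partition} collapses to this single interval — is the step I would verify most carefully.
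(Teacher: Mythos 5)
Your proposal is correct and follows essentially the same route as the paper: in each case you identify the enlargement set $E$ (namely $[t_f-t,\, t_f)$ when $t_f > t_{\fhat}$ and $[t_f-t,\, t_{\fhat}-t)$ when $t_f < t_{\fhat}$), compute its Gaussian mass, and invoke Thm.~\ref{thm:mainTheorem} with $\ell = 2\,\err(\fstar,\fstar)+2\epsilon_1$. The only difference is presentational: the paper records these characterizations of $E$ as observations obtained from Thm.~\ref{thm:E_partition} without spelling out the one-dimensional best-response computation, whereas you derive them explicitly and also flag the implicit requirement $t_{\fhat}-t < t_f$ in case (b) (the regime where the single-interval formula for $E$ is valid), a point the paper's proof glosses over.
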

Note that $t_f > t_{\fhat} \Rightarrow E = [t_f - t, t_f]$
and $t_f < t_{\fhat} \Rightarrow  E = [t_f-t, t_{\fhat}-t]$,
and that if $h$ is realizable, then $\err(\fstar,\fstar) = 0$.  Using this fact and assuming a reasonable relation between $\epsilon_2$ and $\epsilon_1$ we show there exists $\sigma_0$ such that $\POP>0$ for all $\sigma < \sigma_0$,
and that \POP\ can be made arbitrarily \ganesh{close to 1} by reducing $\sigma$.
\begin{restatable}{proposition}{propnecessaryandsufficient}
\label{prop: POP necessary and sufficient}
Let $h$ be realizable and $\epsilon_2 > 2\epsilon_1$. Then, there exists $\sigma_0 >0$ such that for all $\sigma < \sigma_0$ it holds that:
\begin{equation}
\label{eqn: POP characterization equation}
\POP = \begin{cases}
 \Phi_{\sigma}(t_{\fhat} - t) - \Phi_{\sigma}(|t -t_f|) & \text{if } t_f < t_{\fhat}\\
 \Phi_{\sigma}(t_f) - \Phi_{\sigma}(|t-t_f|)
& \text{if } t_f > t_{\fhat}
\end{cases}
\end{equation}
%
\end{restatable}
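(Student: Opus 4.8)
The plan is to evaluate $\POP$ by combining the exact description of $E$ from Thm.~\ref{thm:E_partition} (and the explicit intervals stated after Prop.~\ref{prop: POP linear classifiers sufficiency condition}) with the sign information of Cor.~\ref{corollary: the sign of points in the disagreement set}, and then to exploit realizability together with the symmetry of the Gaussian about its mean $\alpha$.

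First I would reduce $\POP$ to a computation of two masses carved out of $E$ by the point $\alpha$. By Lem.~\ref{lem: characterization of POP}, $\POP = \POP^{+} - \POP^{-}$, and by Cor.~\ref{corollary: the sign of points in the disagreement set} every $x \in E$ satisfies $f(\Delta_f(x)) = 1$ and $f(\Delta_{\fhat}(x)) = -1$. Recalling the definitions of $E^{+}$ and $E^{-}$, this means membership in $E^{+}$ versus $E^{-}$ is decided purely by the sign of $h(x)$; since $h$ is realizable with threshold $\alpha$, we get $E^{+} = E \cap \{x \ge \alpha\}$ and $E^{-} = E \cap \{x < \alpha\}$. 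Hence $\POP = \mathbb{P}_{x\sim D}\{x\in E,\, x \ge \alpha\} - \mathbb{P}_{x\sim D}\{x\in E,\, x < \alpha\}$, and because $D$ is Gaussian with mean $\alpha$ the entire question collapses to how the interval $E$ is split by $\alpha$.

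Next I would substitute the explicit form of $E$, namely $E = [t_f - t,\, t_{\fhat} - t]$ when $t_f < t_{\fhat}$ and $E = [t_f - t,\, t_f]$ when $t_f > t_{\fhat}$; in both cases the left endpoint is $t_f - t$. The structural observation driving the computation is that responding to $f$ classifies $x$ positively exactly when $x \ge t_f - t$, so the strategic error region of $f$ (the symmetric difference of $\{x \ge t_f - t\}$ and $\{x \ge \alpha\}$) is precisely the interval between $t_f - t$ and $\alpha$, of mass $\epsilon_1$ (using $\err(\fstar,\fstar)=0$ under realizability, so that $t_{\fstar} = \alpha + t$). Comparing with $E$, whose left endpoint is also $t_f - t$, shows that $E \cap \{x < \alpha\}$ either coincides with this error region (mass $\epsilon_1$, when $\alpha$ is interior to $E$) or is empty (mass $0$, when $\alpha \le t_f - t$). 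Writing $b$ for the right endpoint, one has $\POP^{+} = \Phi_{\sigma}(b) - \Phi_{\sigma}(\max(\alpha, t_f - t))$, and then invoking the reflection identity $\Phi_{\sigma}(\alpha - u) = 1 - \Phi_{\sigma}(\alpha + u)$ about the mean turns the residual $\Phi_{\sigma}(t_f - t) - 1$ into $-\Phi_{\sigma}(|t - t_f|)$. The two sub-cases $t_f < t$ and $t_f > t$ then collapse into the single closed form $\Phi_{\sigma}(b) - \Phi_{\sigma}(|t - t_f|)$ of Eq.~\eqref{eqn: POP characterization equation}; this is exactly what the absolute value $|t - t_f|$ is encoding.

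The hard part, and where the hypotheses $\epsilon_2 > 2\epsilon_1$ and $\sigma < \sigma_0$ are genuinely used, is excluding the remaining configuration in which $\alpha$ lies strictly to the right of $E$ (i.e. $t_{\fstar} > t_{\fhat}$ in the first branch): there $E \cap \{x < \alpha\} = E$, $\POP$ turns negative, and the claimed formula fails. I would rule this out for small $\sigma$ by expressing $\epsilon_1 = |\Phi_{\sigma}(\alpha) - \Phi_{\sigma}(t_f - t)|$ and $\epsilon_2 = |\Phi_{\sigma}(t_{\fhat}) - \Phi_{\sigma}(t_f)|$ directly in terms of the thresholds and bounding their ratio as $\sigma \to 0$, choosing $\sigma_0$ so that $\epsilon_2 > 2\epsilon_1$ forces $\alpha$ into the interior-or-left position for all $\sigma < \sigma_0$. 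Carrying the two orderings $t_f \lessgtr t_{\fhat}$ in parallel, and the sign bookkeeping in the reflection step, is the main source of case analysis; the rest is a routine Gaussian-CDF calculation.
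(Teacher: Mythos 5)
Your proposal is correct and is essentially the paper's own proof: after normalizing $\alpha$, the paper likewise splits the interval $E$ at the threshold of $h$---its observation that points of $E$ to the right of $\alpha$ contribute positively to \POP\ and points to its left negatively is exactly your identity $E^{+}=E\cap\{x\ge \alpha\}$, $E^{-}=E\cap\{x<\alpha\}$ obtained from Lemma~\ref{lem: characterization of POP} and Corollary~\ref{corollary: the sign of points in the disagreement set}---then substitutes the same explicit intervals for $E$ and evaluates the two Gaussian masses via symmetry about the mean, exactly as in your reflection step. The configuration you isolate as the hard part (all of $E$ lying strictly below $\alpha$) is ruled out in the paper the same way you propose to rule it out: Lemma~\ref{lem: t_f > t/2} uses $\sigma<\sigma_0$ to get $t_f>t/2>0$ (covering the branch $t_f>t_{\fhat}$), and Observation~\ref{observation: t < t_fhat} uses $\epsilon_2\ge 2\epsilon_1$ together with $t_f>0$ to get $t_{\fhat}>t$ (covering the branch $t_f<t_{\fhat}$).
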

We use this to demonstrate instances where the sufficiency condition in Thm.~\ref{thm:mainTheorem} is also necessary.
\begin{restatable}{corollary}{corollaryforPOP>0}
\label{corollary: POP >0}
Suppose $h$ is realizable, $t_f < t$, and $\epsilon_2 > 2\epsilon_1$. Then there exists $\sigma_0 >0$ such that for all $\sigma < \sigma_0$, $\POP > 0$ if and only if $\pr_{x\sim D}\{E\} > 2\epsilon_1$.
\end{restatable}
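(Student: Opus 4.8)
The plan is to reduce the claimed equivalence to the single exact identity $\POP = \pr_{x\sim D}\{E\} - 2\epsilon_1$, from which the ``if and only if'' is immediate. Recall from Lem.~\ref{lem: characterization of POP} and Eq.~\eqref{equation: first characterization of E} that $\POP = \POP^{+} - \POP^{-}$ while $\pr_{x\sim D}\{E\} = \POP^{+} + \POP^{-}$; subtracting gives $\pr_{x\sim D}\{E\} - \POP = 2\POP^{-}$. Hence the whole corollary is equivalent to the statement $\POP^{-} = \epsilon_1$, and I would spend the entire proof establishing this one equation (in the small-$\sigma$ regime licensed by the hypotheses). Once it holds, $\POP = \pr_{x\sim D}\{E\} - 2\epsilon_1$, so $\POP>0 \iff \pr_{x\sim D}\{E\} > 2\epsilon_1$.

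To prove $\POP^{-}=\epsilon_1$ I would first read off both sides geometrically. By Cor.~\ref{corollary: the sign of points in the disagreement set} every $x\in E$ has $f(\Delta_f(x))=1$, so $E^{-}=\{x\in E: h(x)=-1\}$ is exactly the set of truly-negative points that a \emph{transparent} Jury classifies positively, i.e.\ the false positives committed under transparency that happen to lie in $E$. On the other side, realizability gives $\err(\fstar,\fstar)=0$, so $\epsilon_1=\err(f,f)$ is the total transparency error. Thus $\POP^{-}=\epsilon_1$ says precisely that, for small $\sigma$, (i) every transparency error is a false positive, and (ii) every such error lies in $E$.

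Both (i) and (ii) are settled by locating $f$'s effective strategic boundary. A threshold rule with cutoff $t_f$ classifies $x$ positively after strategic play iff $x\geq t_f-t$, so its effective boundary is $t_f-t$, whereas the true boundary is $\alpha$ with $\Phi_{\sigma}(\alpha)=\tfrac12$. The hypothesis $t_f<t$ places the effective boundary strictly below the true boundary, $t_f-t<\alpha$, which is exactly the false-positive regime (giving (i)), with error interval $[t_f-t,\alpha)$ of mass $\Phi_{\sigma}(\alpha)-\Phi_{\sigma}(t_f-t)$. Since $E=[t_f-t,t_f]$ (resp.\ $[t_f-t,t_{\fhat}-t]$) by Prop.~\ref{prop: POP linear classifiers sufficiency condition}, and the small-$\sigma$ regime of Prop.~\ref{prop: POP necessary and sufficient} (where $\epsilon_2>2\epsilon_1$ fixes the relevant ordering of thresholds) places $\alpha$ inside this interval, the error interval is contained in $E$ (giving (ii)). Concretely, computing the two errors directly, $\err(f,\fhat)=\Phi_{\sigma}(t_f)-\Phi_{\sigma}(\alpha)$ and $\err(f,f)=\Phi_{\sigma}(\alpha)-\Phi_{\sigma}(t_f-t)=\epsilon_1$, so $\POP=\Phi_{\sigma}(t_f)+\Phi_{\sigma}(t_f-t)-1=\big(\Phi_{\sigma}(t_f)-\Phi_{\sigma}(t_f-t)\big)-2\big(\Phi_{\sigma}(\alpha)-\Phi_{\sigma}(t_f-t)\big)=\pr_{x\sim D}\{E\}-2\epsilon_1$, as desired; the case $t_f<t_{\fhat}$ is analogous.

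I expect the main obstacle to be exactly steps (i)--(ii): pinning down the sign of $t_f-t-\alpha$ and the interval containment so that the transparency error is \emph{all} of $E^{-}$ and nothing more. This is where $t_f<t$ earns its keep --- it forces the false-positive orientation, and together with the small-$\sigma$, $\epsilon_2>2\epsilon_1$ regime of Prop.~\ref{prop: POP necessary and sufficient} it secures $\alpha\in E$ and the containment. Without $t_f<t$ the difference $\pr_{x\sim D}\{E\}-\POP$ would carry the opposite sign (the error would consist of false negatives outside the relevant part of $E$), $\POP^{-}=\epsilon_1$ would fail, and the clean equivalence would break.
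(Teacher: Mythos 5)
Your proposal is correct, and it arrives at exactly the identity the paper's own proof rests on --- $\POP = \pr_{x\sim D}\{x\in E\} - 2\epsilon_1$ --- but by a genuinely different route. The paper proves the forward implication by specializing Thm.~\ref{thm:mainTheorem} (with $\err(\fstar,\fstar)=0$), and proves the converse by subtracting the closed-form Gaussian expressions for $\pr_{x\sim D}\{x\in E\}$ and $\POP$ obtained in Props.~\ref{prop: POP linear classifiers sufficiency condition} and~\ref{prop: POP necessary and sufficient}, then invoking the symmetry $\Phi_{\sigma}(t-t_f)-\Phi_{\sigma}(t_f-t) = 1-2\Phi_{\sigma}(t_f-t) = 2\epsilon_1$. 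You instead work structurally: from $\pr_{x\sim D}\{x\in E\} - \POP = 2\POP^{-}$ (Lem.~\ref{lem: characterization of POP} plus Eq.~\eqref{equation: first characterization of E}) you reduce everything to the single set-identification $\POP^{-}=\epsilon_1$, which you prove by showing that, under $t_f<t$ and the small-$\sigma$ ordering facts (Lem.~\ref{lem: t_f > t/2} gives $t_f>t/2>0$; Obs.~\ref{observation: t < t_fhat} gives $t<t_{\fhat}$), the transparency errors are exactly the false positives $[t_f-t,0)$ and these lie inside $E$, so by Cor.~\ref{corollary: the sign of points in the disagreement set} they coincide with $E^{-}$. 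Both arguments lean on the same Prop.~\ref{prop: POP necessary and sufficient} infrastructure, but your version buys two things: it handles both threshold orderings $t_f \gtrless t_{\fhat}$ in one stroke and yields the two directions of the ``iff'' simultaneously, and it only needs the containment $[t_f-t,0)\subseteq E$ rather than the exact right endpoint of $E$ --- a robustness worth having, since in the case $t_f<t_{\fhat}$ the set $E$ is really $[t_f-t,\min(t_f,t_{\fhat}-t))$, and your argument is indifferent to which of $t_f$ and $t_{\fhat}-t$ is smaller. The paper's computational route, in exchange, produces the explicit \POP\ formulas that the surrounding results need anyway. One cosmetic slip in your closing remark: when $t_f>t$ the transparency errors become false negatives lying outside $E$, so $\pr_{x\sim D}\{x\in E\}-\POP = 2\POP^{-}$ becomes $0$ (not negative); the equivalence indeed breaks, but the difference does not change sign.
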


Recall, $\epsilon_2$ is the probability mass of the disagreement set, and intuitively, $\epsilon_2$ is large if $m$ is small.  Moreover,    the strategic VC dimension of linear classifiers in $\mathbb{R}^{d}$ is at most $d+1$ for admissible cost functions (\citet{sundaram2021paclearning}) and hence  threshold classifiers are  strategic PAC learnable. 
In particular, with probability at least $1-\delta$ the following holds, which we use for Cor.~\ref{cor: POP  linear classifiers}:
$\epsilon_1 \leq \sqrt{ \log \frac{4}{\delta} / n }.$
%
%
%
%
%
\begin{restatable}{corollary}{CorollaryTwo}
\label{cor: POP  linear classifiers}
Suppose $ \epsilon_2 >  2\sqrt{ \log \frac{4}{\delta} / n }$, where $\delta \in (0,1)$. Then  with probability at least $1-\delta$, $\POP$ is as in Eq.~\eqref{eqn: POP characterization equation}.
\end{restatable}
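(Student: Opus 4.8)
The plan is to read this corollary as a high-probability instantiation of Proposition~\ref{prop: POP necessary and sufficient}: that proposition already gives the exact form of \POP\ under the deterministic hypothesis $\epsilon_2 > 2\epsilon_1$, so here I only need to certify that this hypothesis holds with probability at least $1-\delta$ and then invoke the proposition verbatim. The bridge between the two is the strategic PAC bound on Jury's learning error $\epsilon_1$ that is recorded immediately before the statement; essentially, I am replacing the quantity $\epsilon_1$ in the proposition's hypothesis by its high-probability upper bound.

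Concretely, I would proceed in four short steps. First, recall why the bound $\epsilon_1 \le \sqrt{\log(4/\delta)/n}$ is available: since $d=1$ and the cost is admissible, \citet{sundaram2021paclearning} show the strategic VC dimension of threshold classifiers is at most $d+1 = 2$, so Jury's empirical risk minimizer enjoys uniform convergence and, with probability at least $1-\delta$ over the draw of Jury's $n$ samples, $\err(f,f) - \err(\fstar,\fstar) = \epsilon_1 \le \sqrt{\log(4/\delta)/n}$. Second, let $G$ denote this good event, so that $\pr\{G\} \ge 1-\delta$. Third, on $G$ combine the bound with the standing hypothesis $\epsilon_2 > 2\sqrt{\log(4/\delta)/n}$ to get $\epsilon_2 > 2\sqrt{\log(4/\delta)/n} \ge 2\epsilon_1$, i.e.\ $\epsilon_2 > 2\epsilon_1$. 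Fourth, with $h$ realizable (as assumed for the realizable-$h$ results of this subsection) and $\epsilon_2 > 2\epsilon_1$ now in force on $G$, every hypothesis of Proposition~\ref{prop: POP necessary and sufficient} is met, so there exists $\sigma_0 > 0$ with \POP\ equal to the expression in Eq.~\eqref{eqn: POP characterization equation} for all $\sigma < \sigma_0$. Since $G$ occurs with probability at least $1-\delta$, the conclusion holds with that probability.

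The only genuine subtleties are bookkeeping ones rather than new mathematics, since Proposition~\ref{prop: POP necessary and sufficient} does the real work. The first is that $\epsilon_1$, and hence the threshold $\sigma_0$ supplied by the proposition, is sample-dependent; I would handle this by fixing the realization on $G$ before invoking the proposition, so that the existential $\sigma_0$ is produced pointwise on $G$ and the $1-\delta$ refers solely to the randomness in Jury's $n$ samples, while $\epsilon_2$ (a property of Contestant's estimate) is treated as the given constant satisfying the stated inequality. The second is merely verifying that the PAC rate has the stated form $\sqrt{\log(4/\delta)/n}$, which is exactly the bound quoted before the corollary and follows from the bounded strategic VC dimension. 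I expect the main (and still minor) obstacle to be stating the conditioning cleanly so that the probabilities compose correctly, rather than any estimation step.
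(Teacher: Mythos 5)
Your proposal is correct and follows essentially the same route as the paper's own proof: invoke the strategic-PAC generalization bound (via the strategic VC dimension of threshold classifiers being at most $2$ for admissible costs) to get $\epsilon_1 \leq \sqrt{\log(4/\delta)/n}$ with probability at least $1-\delta$, deduce $\epsilon_2 > 2\epsilon_1$ on that event, and then apply Proposition~\ref{prop: POP necessary and sufficient}. Your added care about conditioning on the good event and the sample-dependence of $\sigma_0$ is bookkeeping the paper omits, but the underlying argument is identical.
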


{\bf Discussion.}
As a take-away from our analysis in this section, say the number of Contestant's samples $m$ is fixed, and the number of Jury's samples $n$ grows. Then the following trade-off occurs: On one hand, as Jury acquires more training samples, his strategic error $\err(f,f)$ decreases (since $\epsilon_1 \leq \sqrt{ \log \frac{4}{\delta} / n }$). On the other hand, as $n$ grows, the sufficient condition on $\epsilon_2$ becomes weaker (so positive \POP\ occurs more), and the expression for POP grows. So increasing the relative gap between $n$ and $m$ is likely to increase \POP, but discarding some of Jury's training points to decrease \POP\ is likely to increase the strategic error. 

\section{Experiments}
\label{sec:experiments}
In this section we complement our theoretical results with an experimental evaluation of \POP.
We begin with a synthetic experiment that validates our results from Sec.~\ref{sub:main}  and ~\ref{sub:linear}.
We then proceed to analyzing a real dataset of loan applications in the peer-to-peer lending platform Prosper (\url{http://www.prosper.com}).
This dataset is unique in that it includes:
(i) logged records of loan applications,
(ii) corresponding system-evaluated risk scores, and
(iii) a social network connecting users of the platform, with links portraying social (rather than financial) relations \citep[see][]{krumme2009lending}. 
We use loan details as user features and risk scores as labels to study \POP\ behavior,
and social network connections to perform an in-depth analysis of the effects of an opaque policy.
Code is publicly available at
\url{https://github.com/staretgicclfdark/strategic_rep}.


{\bf Experimental Setting.}
In all experiments we use the algorithm of \citet{hardt2016strategic} as the strategic learner of Jury
and consider settings for which this algorithm provides learning guarantees
(i.e., a separable cost function and deterministic labeling).
Our main experiments model Contestant as inferring $\fhat$
using a simple ERM approach (in practice, SVM).
We explore other risk-averse strategies for contestant
in Appendix \ref{secappendix: loan experiment addtional result}.
Although our theoretical analysis considered a single $\fhat$ for Contestant,
as we note, our results hold in expectation for the more realistic case in which
each $x$ is associated with a user, and each such user has access to her own sample set $T_C(x)$
of size $m$, on which she trains an individualized $\fhat_x$.
This is the setting we pursue in our experiments.

Because we will consider very small values for $m$, there is a non-negligible chance
that $T_C(x)$ will include only one class of labels.
A plausible modeling assumption in this case is to assert that points with one-sided information do not move.
However, to prevent the attribution of \POP\ to this behavior, we instead choose to
ensure that each of Contestant's sample sets includes at least one example from each class,
for which we use rejection sampling.
App.~\ref{app:experiments} includes additional information on our setup.


\subsection{Split Gaussian}
Following the setup in Sec.~\ref{sub:linear},
we generate data using $D = \mathcal{N}(0,1)$ and $h(x)=\sign(x \ge 0)$.
We set $c(x,x')=\max\{0,x'-x\}$.
We first sample $n=5000$ labeled points and split them 80-20
into a train set $T$ and a held-out test set $S$,
and use $T$ to train Jury's classifier $f$.
For every $x\in S$, we repeatedly sample an additional $m$ points labeled by $f$
to construct training sets $T_C(x)$,
and use these to train  $\fhat_x$ for every $x \in S$ to be used by Contestant.
We repeat this for  $m\in[4,4096]$.
\begin{wrapfigure}{r}{0.42 \textwidth}
	\centering
	\includegraphics[width=0.45\textwidth]{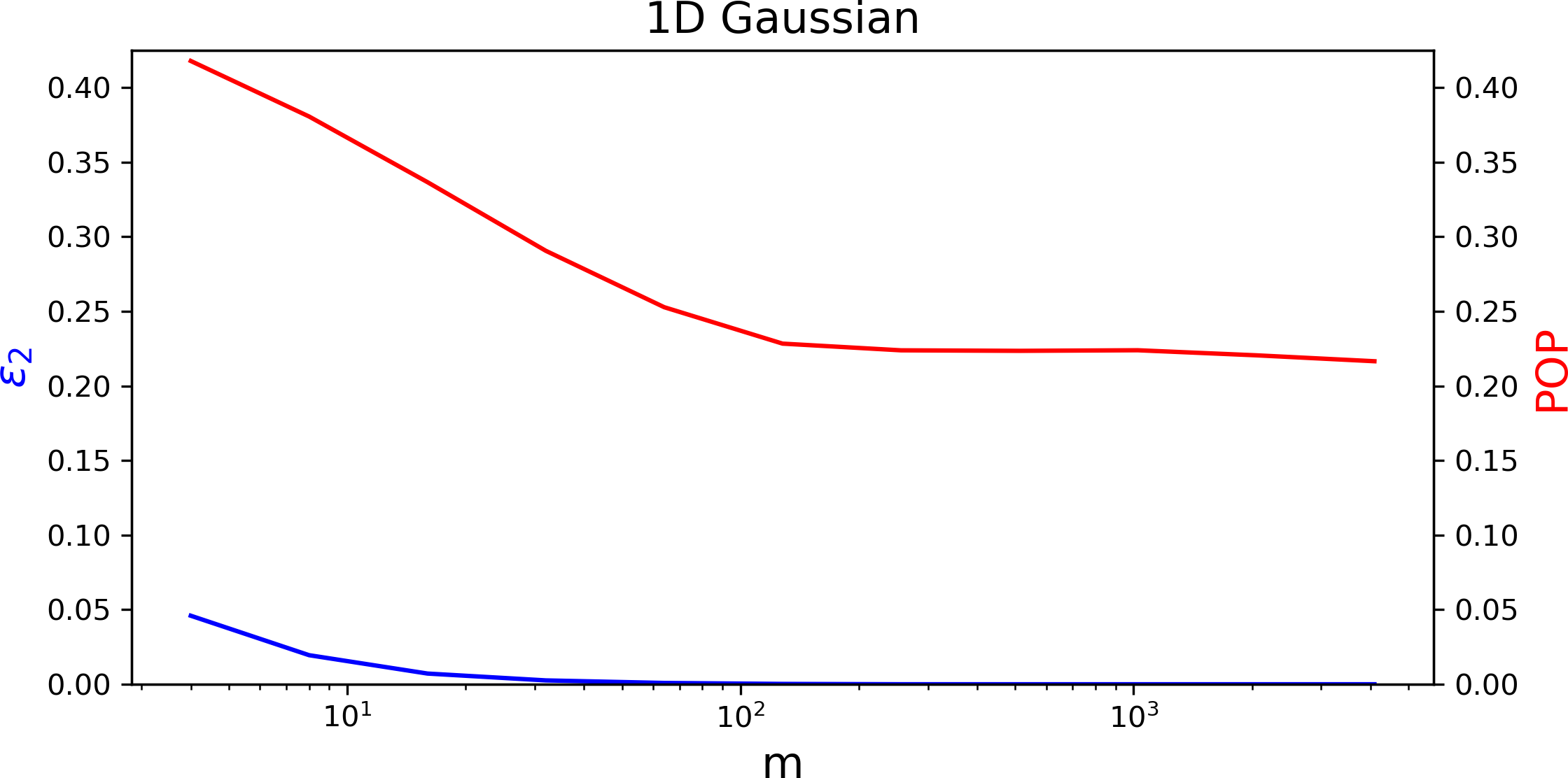}
	\caption{\POP\ and estimation error for $\fhat$ ($\epsilon_2$) on a split
	1D Gaussian.
	\POP\ is large even when $\fhat$ closely estimates $f$.
 	\vspace{-0.5cm}
	}
	\label{fig:1d}
\end{wrapfigure}
Fig.~\ref{fig:1d} shows how \POP\ varies with $m$ (log scale),
along with  corresponding estimation error of $\fhat$ w.r.t. $f$ (i.e., $\epsilon_2$).
 For small $m$ the price of opacity is extremely large ($\sim 0.5$).
As $m$ increases, \POP\ decreases, but quickly plateaus at $\sim 0.3$ for $m \approx 100$.
Note that the estimation error of $f$ is small even for small $m$,
and becomes (presumably) negligible at $m \approx 30$, before \POP\ plateaus.


\subsection{Loan Data} \label{sec:exp_loans}
We now turn to studying the Prosper loans dataset.
We begin by analyzing how \POP\ varies as $m$ increases,
sampling uniformly from the data.
The data includes $n=20,222$ examples,
which we partition $70-15-15$ into three sets:
a training set $T$ for Jury,
a held-out test-set $S$,
and a pool of samples from which we sample points for each $T_C(x), x\in S$.
We set labels according to (binarized) system-provided risk scores,
and focus on six features that are meaningful and that are amenable to modification: available credit, amount to loan, \% trades not delinquent, bank card utilization, total number of credit inquiries, credit history length.
A simple non-strategic linear baseline using these features achieves 84\% accuracy
on non-strategic data.
We compare performance of the algorithm of \citet{hardt2016strategic} (\ganesh{\texttt{HMPW}})
to a non-strategic linear SVM baseline. Both models are evaluated under three Contestant types:
\emph{Non-Strategic}, where points do note move;
\emph{Fully-Informed}, where Contestant plays $\Delta_f$; and
\emph{In the Dark}, where Contestant plays $\Delta_\fhat$.

\begin{figure*}[t]
	\centering
	\includegraphics[width=0.44\textwidth]{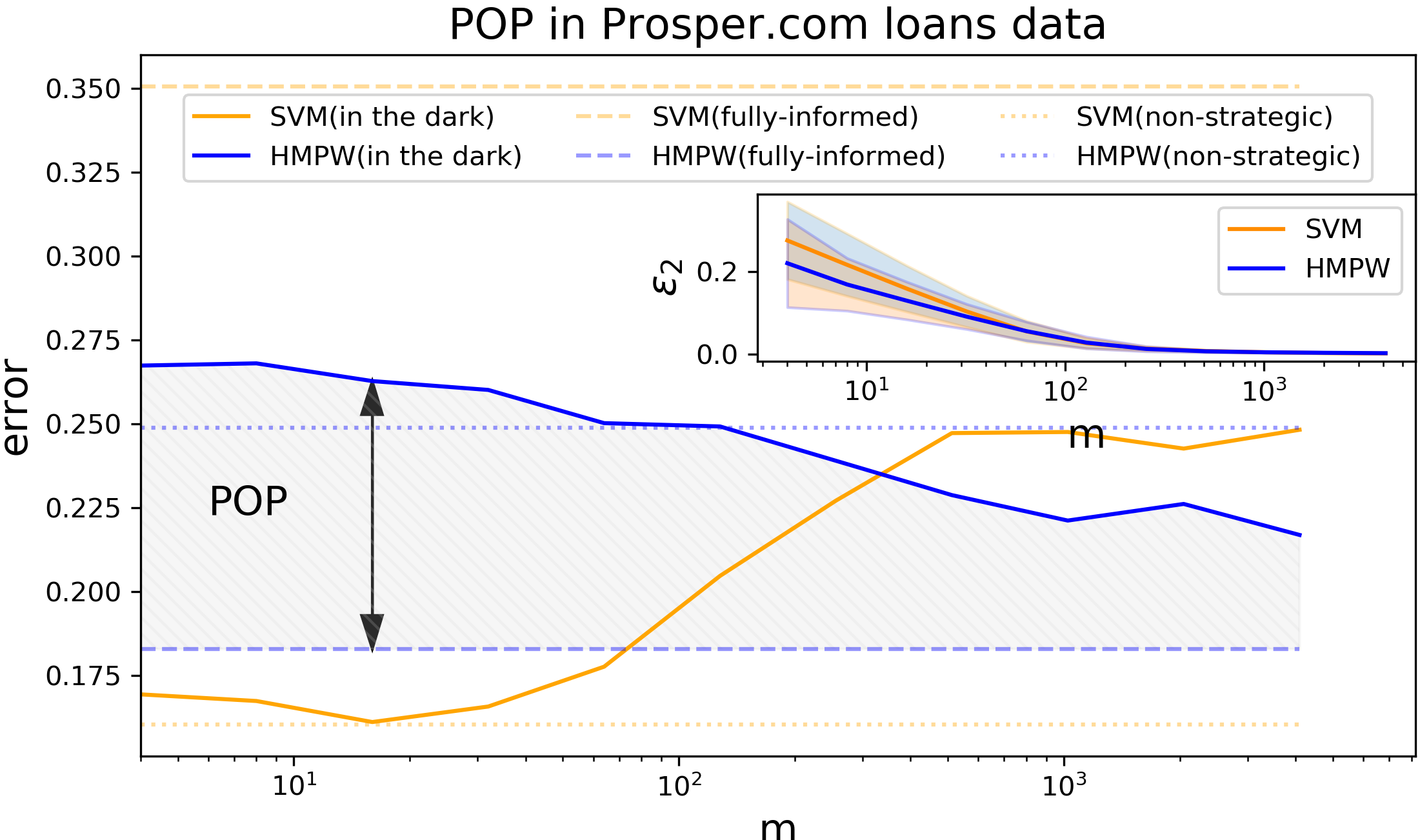} \,
	\includegraphics[width=0.26\textwidth]{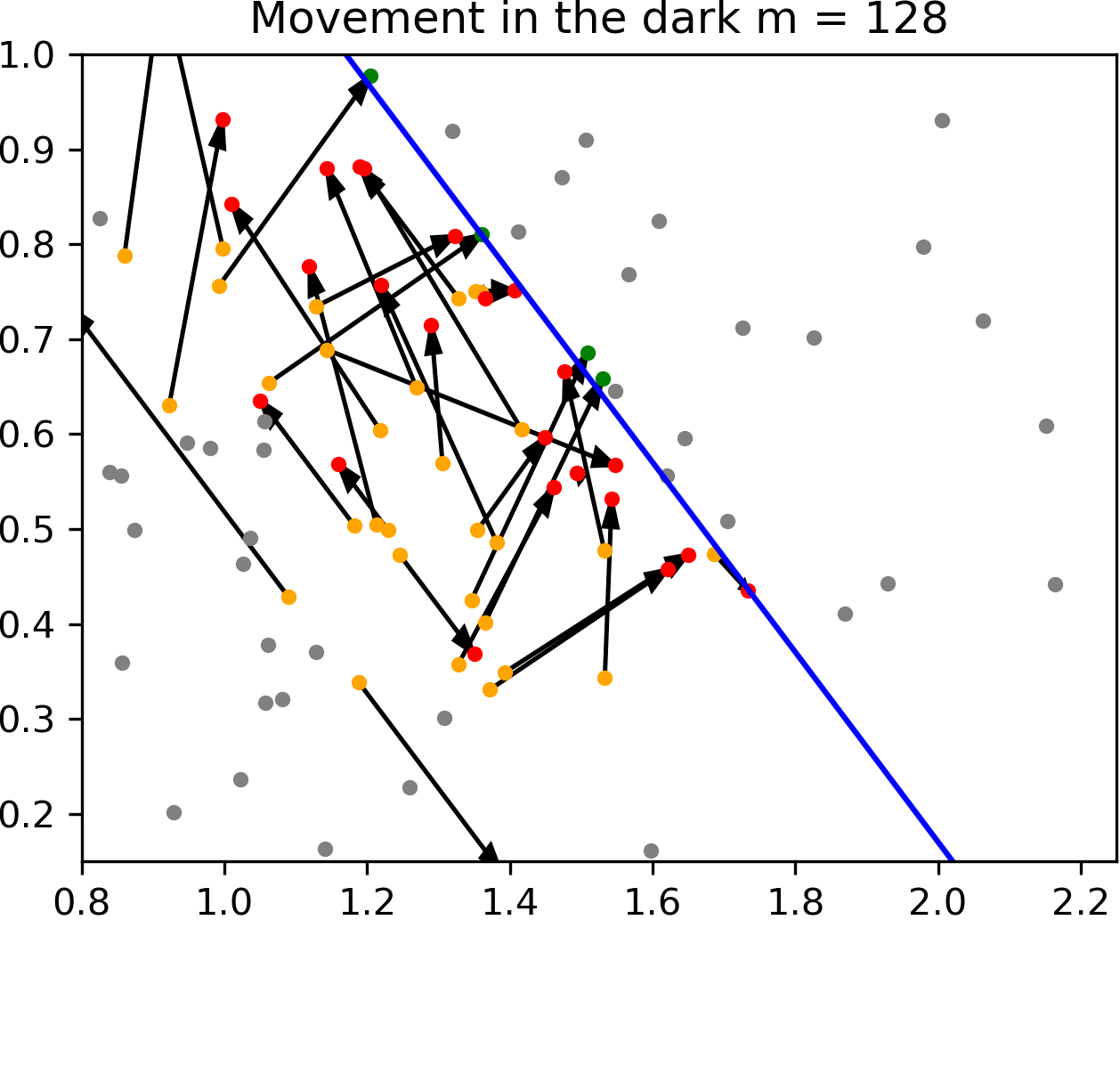} \,
	\includegraphics[width=0.26\textwidth]{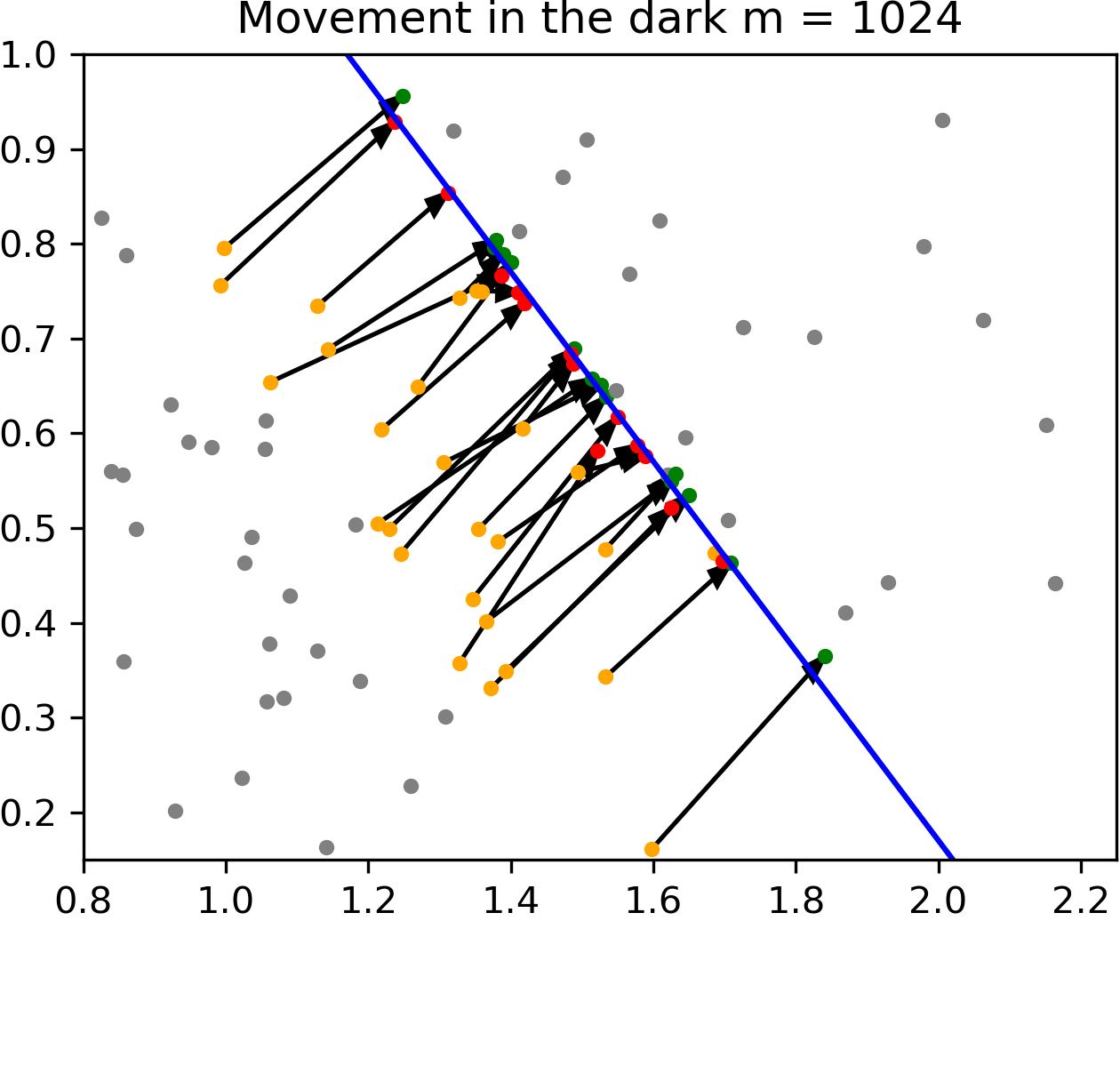}
	\caption{\textbf{(Left)} Prediction errors on loans data for three user types:
	\emph{non-strategic},
	\emph{fully-informed}, and \emph{in the dark}.
	\POP\ decreases as $m$ increases but remains considerable
	even for large $m$.
	\textbf{(Center+Right)} Illustrations of feature modifications
	for $m=128$ and $m=1024$, projected to 2D.
	Blue line is the (projected) classifier threshold,
	with points to its left classified as positive.
	Features that were not modified are in grey,
	and features that were modified are in orange,
	with arrows mapping to the modified features,
	with green and red indicating positive and negative outcomes, respectively.
	}
	\label{fig:prosper1}
\end{figure*}

{\bf Price of Opacity.}
Fig.~\ref{fig:prosper1} (left) shows how predictive performance varies for increasing $m$.
\POP\ is the difference between lines corresponding to the
In the Dark and Fully-Informed types (shaded area for \ganesh{\texttt{HMPW}}).
In the fully-informed case ($\Delta_f$), the performance of \ganesh{\texttt{HMPW}}
closely matches the benchmark of SVM on non-strategic data,
showing that in this case, by correctly anticipating user behavior,
\ganesh{\texttt{HMPW}} is able to a account for gaming in learning.
However, when Contestant is in the dark ($\Delta_\fhat$),
for small $m$ the performance of \ganesh{\texttt{HMPW}} is as bad as when Contestant does not move at all,
and the price of opacity is large ($\sim 9\%$).
As $m$ increases,
Contestant better estimates $\fhat$,
and the performance of \ganesh{\texttt{HMPW}} increases.
However, even for very large $m$ (e.g., $m=4096$),
\POP\ remains substantial ($\sim 4\%$).

Fig.~\ref{fig:prosper1} (center; right) visualizes how points move under \ganesh{\texttt{HMPW}}
when Contestant is \emph{In the Dark},
for medium-sized sample sets ($m=\ganesh{128}$) and large sample sets ($m=\ganesh{1024}$).
For visualization we project points down to $\R^2$ in a way that aligns with the predictive model $f$,
ensuring that points are classified as positive iff they appearing above the line.
The figures demonstrate which points move and where.
For $m=\ganesh{128}$, some points move in the right direction, others move in seemingly arbitrary directions.
For $m=\ganesh{1024}$, most points move in the right direction, but in many cases fall short
of the classification boundary.

{\bf Opacity and Social Inequity.}
Our analysis thus far has been focused on the effects of an opaque policy to the payoff of Jury.
But an opaque policy primarily keeps \emph{users} in the dark,
and here we analyze the effects of opacity on the payoff to users.
When Jury anticipates strategic behavior, he must `raise the bar' for positive classification
to prevent negative-labeled points from gaming.
But due to this, positive-labeled points must also make effort to be classified as such.
Ideally, Jury can set up $f$ in a way that positive-labeled users are classified appropriately
if they invest effort correctly.
This, however, may require exact knowledge of $f$,
and when positive-labeled users are in the dark, the lack of access to $f$
may cause them to be classified negatively despite their effort in modification (see Cor. \ref{corollary: the sign of points in the disagreement set}).

Fig.~\ref{fig:prosper2} (right) shows for increasing $m$
the percentage of positive-labeled users
that are classified correctly under full information (i.e., $f(\Delta_f(x))=1$),
but are classified incorrectly when in the dark  (i.e., $f(\Delta_\fhat(x))=-1$);
this is precisely the positive enlargement set $E^+$ in
Cor. \ref{corollary: the sign of points in the disagreement set}.
As can be seen, for low values of $m$, roughly $50\%$ of these users are classified incorrectly.
When considering a heterogeneous population of users varying in the amount of information
they have access to (i.e., varying in $m$),
our results indicate a clear form of inequity towards individuals
who are not well-connected.
This is a direct consequence of the opaque policy employed by Jury.

{\bf Stories from the Network.}
We now dive into the Prosper.com social network
and portray stories describing individual users and how their
position within the network affects how they are ultimately classified---and hence whether their loan is approved or denied---under an opaque policy.
Fig.~\ref{fig:prosper2} (left) presents five such cases.
The figure shows for each user her initial features $x$
and her modified features $\Delta_\fhat(x)$ (if they differ).
We assume users observe the features and predictions of their neighbors in the network up to two hops away, which they use to learn $\fhat$.
As before, points are projected down to $\R^2$,
so that points above the line are approved and below it are denied (to reduce cluttering, projections
slightly vary across users).
The plot also shows network connections between users (superimposed).
\begin{figure*}[t]
	\centering
	\includegraphics[width=0.62\textwidth]{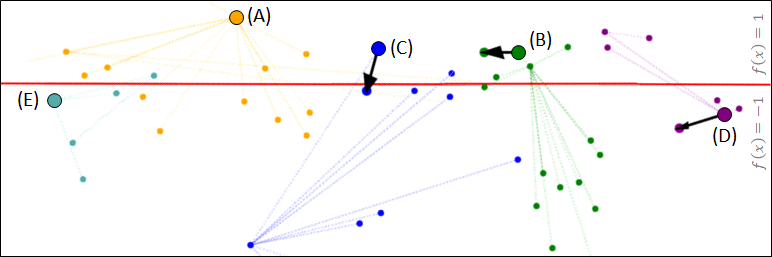} \qquad
	\includegraphics[width=0.18\textwidth]{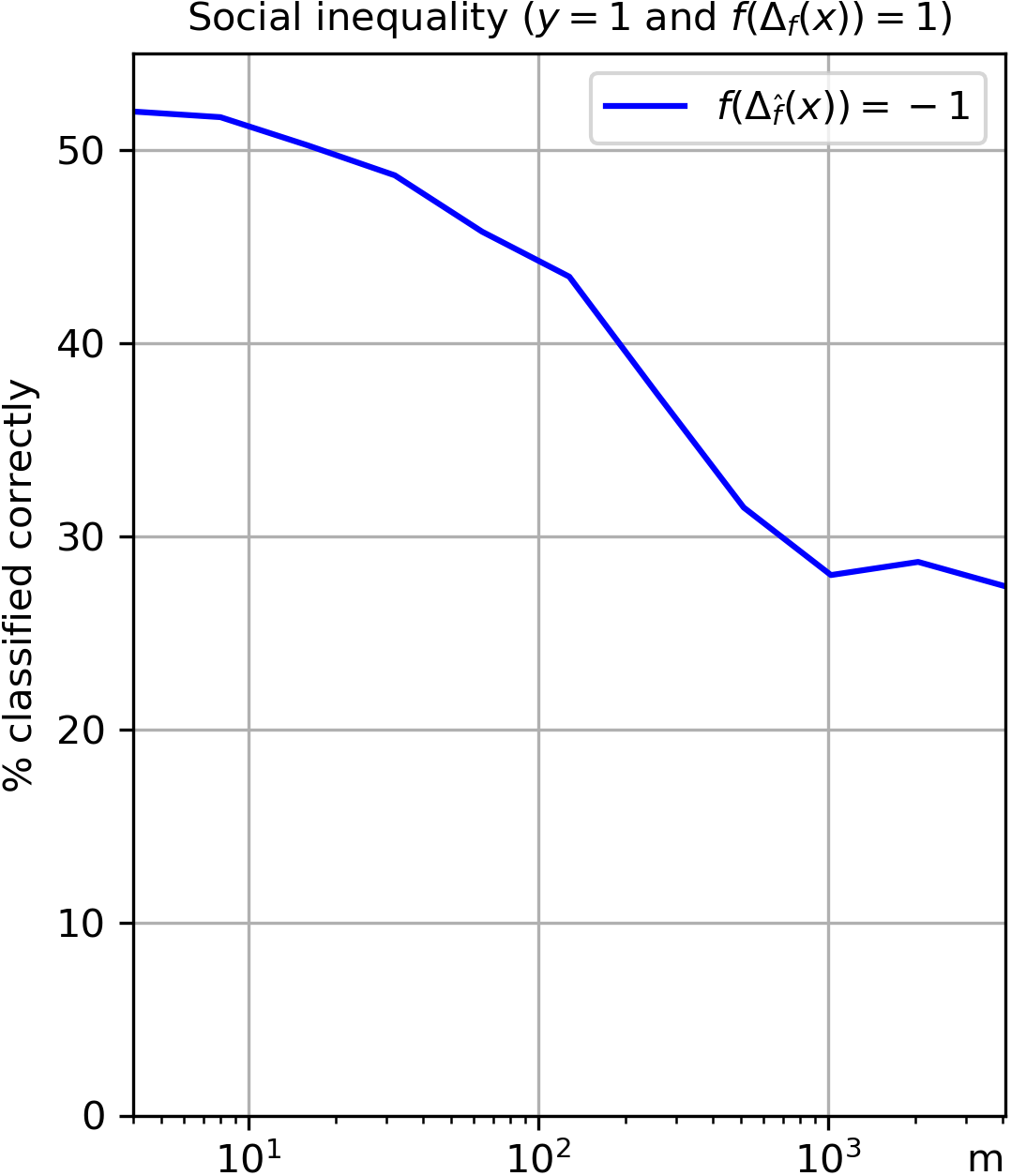}
	\caption{\textbf{(Left)} `Stories' describing outcomes
	for individual users in the Prosper.com social network.
	Users are positioned according to their features,
	with points above the line approved for loan.
	Each user is shown along with her network neighbors,
	from whom she learns $\fhat$.
	Arrows depict if and how features have been modified.
	\textbf{(Right)} Inequity in wrongful loan denial caused by an opaque policy.
	}
	\label{fig:prosper2}
\end{figure*}

\textbf{User A} is approved a loan. She is classified for approval on her true features.
She is well-connected and has both neighbors that have been denied and approved for loans,
and her estimated $\fhat$ is good. She therefore correctly infers her predicted approval
and does not modify her features.

\textbf{User B} is also approved a loan, but at an unnecessary cost.
She would have been approved on her true features,
but most of her neighbors were denied loans, leading to errors in estimating $\fhat$.
These have led her to falsely believe that she must modify her features to be approved.
The cost of modification would have been eliminated had she known $f$.

\textbf{User C} would have been approved for a loan---had she reported her true features.
However, all but one of her neighbors were denied loans.
This caused a bias in her estimate of $\fhat$, so large that modifying her
features on this account resulted in her loan being denied (unjustifiably).

\textbf{User D} is slightly below the threshold for approval. With the correct $f$,
she would have been able to modify her features to receive approval.
However, despite having neighbors who have been approved,
her estimated $\fhat$ is not exact. As a result, even after modifying her features
she remains to be denied (despite believing she will be approved).

\textbf{User E} would not have been approved. But approval is within her reach,
and had she known $f$, the cost of modification would have been small enough to make it worthwhile.
However, she is not well connected, and errors in her estimated $\fhat$
have led her to (falsely) believe that modifications for approval are too costly, and so she remains denied.


\section{All-Powerful Jury}
\label{sec:all powerful Jury}
\ganesh{In this section, we study the setting where Jury \emph{exactly knows} Contestant's response 
to his classifier $f$, despite $f$'s opacity. As mentioned in Sec.~\ref{sec:prelims}, this setting is less realistic, but we study it to show that this knowledge is precisely what Jury is missing to successfully learn a strategic classifier  when users  are in the dark. We introduce the notion of a \emph{response function} for the Contestant, denoted by $R:\calH \to \calH$. When Jury plays $f$, Contestant responds by moving the points with respect to classifier $R(f)$. For example, in the setting of Jury in the dark (Sec.~\ref{sec:jitd}), $R(f)=\fhat$ and $R$ is unknown to Jury.
In contrast, an all-powerful Jury is assumed to know the response function~$R$. Note that, when $R$ is the identity map ($R(f) = f$), the all-powerful jury setting reduces to the strategic classification setting. 

We now give an agnostic PAC learnability result for $R$-strategic classification. This generalizes a recent result of \citet{Zhang2020IncentiveAwarePL,sundaram2021paclearning}, who defined the strategic ERM rule and strategic VC dimension, using similar techniques. 
Recall 
that  $\err(f,R(f))$ denotes the error when Jury plays the  classifier $f$ and Contestant responds to $R(f)$. In this setting,  define  $R$-optimal classifier for Jury is 
$
  f^{\star}_{R} = \argmin_{f\in \mathcal{H}} \text{err}(f, R(f)). $ 
We drop the subscript $R$ from $f^{\star}_{R}$, when it is immediate from context.

First, we introduce  $R$-strategic ERM rule which minimizes the training error with an explicit knowledge of Contestants response function $R$.  Given a training set $T_J = \{x_i,h(x_i)\}_{i\in [n]}$ define 
$\widehat{\err}(f,R(f)) = 1/n \sum_{i\in [n]} \mathds{1}\{f(\Delta_{R(f)}(x)) \neq h(x) \}.$ 
The $R$-strategic ERM  returns 
$\argmin_{f\in \calH} $ $~ {\widehat{\err}(f, R(f))}.$
\begin{definition}[$R$-strategic VC dimension]\label{defintion: RSVC}
Let $\calH$ be a hypothesis class, and let $\calH' = \{ f' | \exists f\in \calH \text{ such that } f'(x) = f(\Delta_{R(f)}(x))\}$. Then $\SVC(\mathcal{H})= \VC(\calH^{'}).$
\end{definition}
%
\begin{restatable}{proposition}{thmRLearnability}
\label{prop:Learnability for all powerful Jury}
For a given hypothesis class $\calH$, Jury with sample set $T_J$ containing $n$ iid samples, using $R$-strategic ERM rules computes an $f$ such that with probability $1-\delta$ the following holds: 
$\err(f, R(f))\leq \err(f^{\star}, R(\fstar))  + \epsilon$, where  
$ \epsilon \leq \sqrt{C (d\log(d/\epsilon) + \log(\frac{1}{\delta})/n}$, $C$ is an absolute constant, and $d$ is the $R$-strategic VC dimension of $\calH$.
\end{restatable}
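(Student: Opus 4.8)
The plan is to reduce $R$-strategic agnostic learning to ordinary (non-strategic) agnostic PAC learning over the induced class $\calH'$ from Def.~\ref{defintion: RSVC}, and then invoke the fundamental theorem of statistical learning. The key observation is that, because $R$ is a fixed (Jury-known) response function and ties in $\Delta$ are broken deterministically, each $f\in\calH$ induces a well-defined binary classifier $f'$ given by $f'(x)=f(\Delta_{R(f)}(x))$, and the $R$-strategic error depends on $f$ only through $f'$. Concretely,
\[
\err(f,R(f)) = \pr_{x\sim D}\{h(x)\neq f(\Delta_{R(f)}(x))\} = \pr_{x\sim D}\{h(x)\neq f'(x)\},
\]
so the $R$-strategic population error of $f$ equals the ordinary $0$--$1$ error of $f'$ under $D$ with labels $h$; the identical identity holds verbatim for the empirical quantity $\widehat{\err}(f,R(f))$ and the empirical $0$--$1$ error of $f'$ on $T_J$.

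First I would make this correspondence precise. By Def.~\ref{defintion: RSVC} the set of achievable effective classifiers $\{\, x\mapsto f(\Delta_{R(f)}(x)) : f\in\calH \,\}$ is \emph{exactly} $\calH'$, so minimizing $\widehat{\err}(f,R(f))$ over $f\in\calH$ is literally the same as running standard empirical risk minimization over $\calH'$. In particular, if $f$ is the output of the $R$-strategic ERM rule and $f'$ its effective classifier, then $f'$ minimizes the empirical $0$--$1$ error over $\calH'$, and the effective classifier $(\fstar)'$ of the $R$-optimal $\fstar$ also lies in $\calH'$.

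Next I would apply the agnostic VC bound to $\calH'$. Since $\calH'$ is a fixed, sample-independent class with $\VC(\calH')=\SVC(\calH)=d$, the classical uniform convergence bound gives that, with probability at least $1-\delta$ over the $n$ iid samples, every $g\in\calH'$ satisfies $|\widehat{L}(g)-L(g)|\leq \epsilon/2$, where $L$ and $\widehat{L}$ denote the population and empirical $0$--$1$ errors and $\epsilon$ has the stated form (the absolute constant $C$ absorbing the factor of two). A standard ERM decomposition then yields
\[
L(f') \leq \widehat{L}(f') + \tfrac{\epsilon}{2} \leq \widehat{L}((\fstar)') + \tfrac{\epsilon}{2} \leq L((\fstar)') + \epsilon,
\]
where the middle inequality uses that $f'$ is the empirical minimizer over $\calH'$. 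Translating back through the error identity of the first paragraph gives $\err(f,R(f))\leq \err(\fstar,R(\fstar))+\epsilon$, as required.

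I expect the only genuine subtlety --- rather than a real obstacle --- to be justifying that the map $f\mapsto f'$ is well-defined and that $\calH'$ is sample-independent, so that the off-the-shelf VC inequality applies unchanged. This rests on $R$ being a fixed deterministic function and on the deterministic (lexicographic) tie-breaking in $\Delta$, which together guarantee that $f'$ is a genuine $\{\pm1\}$-valued classifier and that $\calH'$ is a fixed class whose VC dimension equals $\SVC(\calH)$ by definition. Once this reduction is in place the result is immediate from classical learning theory, which is precisely the sense in which Def.~\ref{defintion: RSVC} is the ``right'' complexity measure for the all-powerful Jury setting and the respect in which this generalizes \citet{Zhang2020IncentiveAwarePL,sundaram2021paclearning} (their identity response $R(f)=f$ recovers the strategic VC dimension).
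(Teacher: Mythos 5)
Your proposal is correct and follows essentially the same route as the paper: both reduce the $R$-strategic problem to standard agnostic PAC learning by passing to the induced class $\calH'=\{x\mapsto f(\Delta_{R(f)}(x)) : f\in\calH\}$, identifying $\err(f,R(f))$ with the ordinary $0$--$1$ error of $f'$, and invoking the classical VC generalization bound with $\VC(\calH')=\SVC(\calH)$. Your write-up merely makes explicit the uniform-convergence and ERM-decomposition steps that the paper's proof leaves implicit.
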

\begin{proof}
For each $f$, let $f'$ be  such that $f'(x) = f(\Delta_{R(f)}(x))$. It is easy to see that 
$\pr_{x\sim D}\{h(x) \neq f(\Delta_{R(f)}(x))\} =  \pr_{x\sim D}\{h(x) \neq f'(x)\}$. 
Hence, the agnostic-PAC learnability in the all powerful Jury setting can be given in terms of the VC dimension of $\calH' = \{ f' | \exists f\in \calH \text{ such that } f'(x) = f(\Delta_{R(f)}(x))\}$. It follows from Def.~\ref{defintion: RSVC} that $R$-strategic VC of $\calH$ is equal to the VC dimension of $\calH'$. The proof follows by noting that the bounds in the theorem statement are the agnostic-PAC generalization bounds for standard classification setting with the VC dimension of the hypothesis class replaced by $R$-strategic VC dimension of $\calH$. 
\end{proof}}
\section{Discussion}
The theoretical and empirical findings in this paper show that the ``optimistic firm’’ is wrong in hoping that small errors in estimating $f$ by users in the dark will remain small; as we demonstrate, small estimation errors can grow into large errors in the performance of~$f$. 
While there is a much-debated public discussion regarding the legal and ethical right of users to understand the reasoning behind algorithmic decision-making, 
in practice firms have not necessarily been propelled so far to publicize their models. Our work provides formal incentives for firms to adopt transparency as a policy.
Our experimental findings provide empirical support for these claims,
albeit in a simplified setting,
and we view the study of more elaborate user strategies
(e.g., Bayesian models, bounded-rational models,
or more nuanced risk-averse behavior)
as interesting future work.
Finally, our work highlights an interesting open direction in the study of strategic classification---to determine the optimal policy of a firm who wishes to maintain (a level of) opaqueness alongside robustness to strategic users.

\noindent \textbf{Acknowledgement:}
GG is thankful for the financial support  from the Israeli Ministry of Science and Technology grant 19400214.  VN is thankful to be supported by the European Union’s Horizon 2020 research and innovation program under grant agreement No 682203 -ERC-[Inf-Speed-Tradeoff]. 
This research was supported by the Israel Science Foundation (grant  336/18). Finally, we thank anonymous reviewers for their helpful comments.

\bibliography{refs.bib}
\bibliographystyle{icml2021}
\appendix
%
%
\section{Missing Proofs}
\label{apndx:missing proofs}
Before we proceed to the proofs, we recall the definitions of $S$, $E$, $S_{-1,1}$, $S_{1,-1}$, $E_{-1,1}$, and $E_{1,-1}$ below:
$$ S = \{x \mid f(x) \neq \fhat(x)\}, ~~ E = \{x \mid f(\Delta_f(x)) \neq  f(\Delta_{\fhat}(x))\},$$
$$ S_{-1,1} = \{x \mid f(x)=-1,\ \fhat(x)=1\},~~ S_{1,-1} = \{x \mid f(x)=1,/ \fhat(x)=-1\},$$
$$ E_{-1,1} = \{x \mid \Delta_{\fhat}(x)\in S_{-1,1},~ f(\Delta_{f}(x)) =1\},~~ E_{1,-1} = \{x \mid \Delta_{f}(x)\in S_{1,-1}\setminus \{x\},~ \Delta_{\fhat}(x)=x \}\ .$$
\subsection{Proofs of Theorems in Section \ref{sub:main}}
\mainThm*
\begin{proof}[Proof of Theorem \ref{thm:mainTheorem}]
From Eq.~\eqref{equation: first characterization of E}\ and Lemma \ref{lem: characterization of POP}, we have $\mathbb{P}_{x\sim D}\{x\in E\} = \POP + 2\POP^{-}$. Notice that points in $E^{-}$ contribute to the error of $f$, implying $\POP^{-} \leq \err(f,f)$. Recall that $\err(f,f) = \text{err}(f^{\star},f^{\star}) + \epsilon_1$. Since $\POP + 2\POP^{-} > 2(\err(f^{\star},f^{\star}) + \epsilon_1)$, we have $\POP>0$, completing the proof.
\end{proof}

\ECharacterization*
\begin{proof}
First, recall $S_{-1, 1} = \{x \mid f(x)=-1 \land \fhat(x) = 1\}$, and $S_{1, -1} = \{x \mid  f(x)=1 \land \fhat(x) = -1\}$.
Further,
\begin{align*}
E_{-1,1} &= \{x \mid \Delta_{\fhat}(x)\in S_{-1,1},~ f(\Delta_{f}(x)) =1\};\\
E_{1,-1} &= \{x \mid \Delta_{f}(x)\in S_{1,-1}\setminus \{x\},~ \Delta_{\fhat}(x)=x \}.
\end{align*}

Suppose $y = \Delta_{f}(x)$, and $z= \Delta_{\fhat}(x)$. First assuming $f(y) \neq f(z)$, we show that $x \in E$. \\

\textbf{Case a} ($y=x$ and $z\neq x$): Since $y=x$, either $f(x) = 1$ or $\{u \mid c(x,u) <2 ~\text{ and }~ f(u)=1\} = \emptyset$.
Moreover, as $z\neq x$, $\fhat(x) = -1$, and $z = \argmin_{u}\{c(x,u) \mid c(x,u)<2 ~\text{ and }~ \fhat(u)=1\}$.
We first argue in this case that $f(x)=1$. Suppose $f(x)=-1$. Then $\{u \mid c(x,u) <2 ~\text{ and }~ f(u)=1\} = \emptyset$ implying $f(z) = -1$. Since $f(x) \neq f(z)$, this gives a contradiction.
If $f(x) = 1$ and $f(z)=-1$ then as $\fhat(z)=1$, $z\in S_{-1,1}$ and hence $x\in E_{-1,1}$. \\ 

\textbf{Case b} ($y\neq x$ and $z= x$): Again as $z=x$, either $\fhat(x) = 1$ or $\{u \mid c(x,u) <2 ~\text{ and }~ \fhat(u)=1\} = \emptyset$. Also as $y\neq x$, $f(x) = -1$ and $y = \argmin_{u}\{c(x,u) \mid c(x,u)<2 ~\text{ and }~ f(u)=1\}$. If $\fhat(x) = 1$ then as $f(x) = -1$, $x\in S_{-1,1}$. Since $f(y)=1$ we have $x\in E_{-1,1}$. If $\fhat(x)=-1$ and $\{u \mid c(x,u) <2 ~\text{ and }~ \fhat(u)=1\} = \emptyset$ then $\fhat(y) =-1$ and $f(y)=1$ implying $x\in E_{1,-1}$.\\

\textbf{Case c} ($y\neq x$ and $z\neq x$): Hence, $y = \{u \mid c(x,u) <2 ~\text{ and }~ f(u)=1\}$ and $z = \argmin_{u}\{c(x,u) \mid c(x,u)<2 ~\text{ and }~ \fhat(u)=1\}$. Since $f(y)=1$, by assumption it follows that $f(z)=-1$. This implies $z\in S_{-1,1}$ and $x\in E_{-1,1}$.\\

\noindent Now we show that if $x\in E$ then $f(\Delta_{f}(x)) \neq f(\Delta_{\fhat}(x))$. Suppose $x\in E_{-1,1}$. Then $f(\Delta_{\fhat}(x)) = -1$ and $f(\Delta_{f}(x)) = 1$. Similarly if $x\in E_{1,-1}$ then $f(\Delta_{f}(x)) = 1$ and $f(\Delta_{\fhat}(x)) = -1$.
\end{proof}

\subsection{Proofs of Propositions and Corollaries in Section \ref{sub:linear}}
\textbf{Proposition \ref{prop: POP linear classifiers sufficiency condition}.} \emph{The following are sufficient for $\POP>0$:\\
(a) \,\, $\Phi_{\sigma}(t_f) -  \Phi_{\sigma}(t_f-t)  > \ell$
 ~~when $t_f > t_{\fhat}$\\
(b) \,\, $\Phi_{\sigma}(t_{\fhat}-t) -  \Phi_{\sigma}(t_f-t)  >  \ell$
~~when $t_f < t_{\fhat}$}
\begin{proof}
Recall $\ell= 2 \cdot \err(\fstar,\fstar) + 2\epsilon_1 $. We determine the enlargement set $E$ in both the cases below and then in each case use Thm.~\ref{thm:mainTheorem} to conclude the proof.

\textbf{Case a ($t_f > t_{\widehat{f}}$):} Using Theorem \ref{thm:E_partition}, $E$ can be determined easily. We note $E$ in this case in the following observation (also see Fig.  \ref{fig:E for threshold classifiers}).
\begin{observation}\label{observation: E when t_f > t_fhat}
If $t_f > t_{\widehat{f}}$ then $E = [t_{f}-t, t_f)$.
\end{observation}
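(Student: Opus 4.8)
The plan is to read off $E$ from the partition $E = E_{-1,1} \uplus E_{1,-1}$ guaranteed by Theorem~\ref{thm:E_partition}, specializing the four sets $S_{-1,1}, S_{1,-1}, E_{-1,1}, E_{1,-1}$ to one-dimensional threshold classifiers with $t_{\fhat} < t_f$. First I would record the best-response behaviour induced by an admissible cost (Def.~\ref{def:admissible_cost}): for a threshold classifier with threshold $t_g$ the positive region is $[t_g,\infty)$, so a point $x \geq t_g$ does not move, a point with $t_g - t \leq x < t_g$ moves to the cheapest feasible positive point $t_g$ (feasibility is exactly reachability within the budget, governed by $t = \sup_{\delta>0}\{c(x,x+\delta)\leq 2\}$ together with instance-invariance), and a point with $x < t_g - t$ cannot reach the positive region and stays put.

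Next I would compute the disagreement pieces. Since $t_{\fhat} < t_f$, we get $S_{-1,1} = \{x : x < t_f,\ x \geq t_{\fhat}\} = [t_{\fhat}, t_f)$ and $S_{1,-1} = \{x : x \geq t_f,\ x < t_{\fhat}\} = \emptyset$. The emptiness of $S_{1,-1}$ immediately forces $E_{1,-1} = \emptyset$, so all of $E$ comes from $E_{-1,1}$.

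Then I would evaluate $E_{-1,1} = \{x : \Delta_{\fhat}(x) \in S_{-1,1},\ f(\Delta_f(x)) = 1\}$ using the movement rules above. The condition $f(\Delta_f(x)) = 1$ holds exactly for $x \geq t_f - t$ (points at or above $t_f$ stay positive, points in $[t_f - t, t_f)$ move up to $t_f$, and points below $t_f - t$ remain negative). The condition $\Delta_{\fhat}(x) \in [t_{\fhat}, t_f)$ holds exactly for $x \in [t_{\fhat} - t, t_f)$: points in $[t_{\fhat}, t_f)$ stay in place inside the interval, points in $[t_{\fhat} - t, t_{\fhat})$ move up to $t_{\fhat}$, which lies in $[t_{\fhat}, t_f)$ because $t_{\fhat} < t_f$, and points below $t_{\fhat} - t$ stay below $t_{\fhat}$. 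Intersecting the two conditions and using $t_f - t > t_{\fhat} - t$ yields $E_{-1,1} = [t_f - t, t_f)$, hence $E = [t_f - t, t_f)$.

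The main obstacle is the bookkeeping in this last step: one must track where a point lands under $\Delta_{\fhat}$ and then apply \emph{Jury's} classifier $f$ to that landing point, and the decisive observation is that any $x$ which Contestant's map $\Delta_{\fhat}$ pulls up only reaches $t_{\fhat} < t_f$, so it remains negative under $f$. This is exactly why $E$ coincides with the cost-boundary strip $[t_f - t, t_f)$ of $f$. Endpoint conventions (open versus closed at $t_f - t$) depend on whether the budget constraint is met with equality there, but since $D$ is Gaussian this ambiguity carries zero mass and does not affect the probability computations that follow.
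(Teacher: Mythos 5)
Your proposal is correct and follows essentially the same route as the paper: the paper's own justification is a one-line appeal to Theorem~\ref{thm:E_partition} (stating that $E$ ``can be determined easily,'' with a pointer to Fig.~\ref{fig:E for threshold classifiers}), and your argument is exactly that specialization, worked out in full—showing $S_{1,-1}=\emptyset$ forces $E_{1,-1}=\emptyset$ and then computing $E_{-1,1}=[t_f-t,t_f)$ from the threshold movement rules. Your explicit handling of the endpoint convention (a measure-zero issue under the Gaussian) is a detail the paper glosses over, but it changes nothing.
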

From the property of normal distribution we have $\pr_{x\sim D}\{x\in E\} = \Phi_{\sigma}(t_f) -  \Phi_{\sigma}(t_f-t)$, and the result follows from Thm.~\ref{thm:mainTheorem}.\\

\textbf{Case b  ($t_{f} < t_{\widehat{f}}$): } The set $E$ in this case is as in Obs.~\ref{observation: E when t_f > t_fhat} (also see Fig. \ref{fig:E for threshold classifiers}).
\begin{observation}\label{observation: E when t_f < t_fhat}
If $t_{f} < t_{\widehat{f}}$ then $E = [t_f-t, t_{\widehat{f}} - t)$.
\end{observation}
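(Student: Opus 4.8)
The plan is to derive this set identity directly from the explicit, $h$-free characterization of $E$ in Theorem~\ref{thm:E_partition}, specialized to one-dimensional threshold classifiers with an admissible cost. The first step is to record the best-response behavior forced by admissibility (Def.~\ref{def:admissible_cost}): since $c(x,x+\delta)$ depends only on $\delta$ and $t=\sup_{\delta>0}\{c(x,x+\delta)\le 2\}$, a point classified negatively by a threshold $\tau$ moves to $\tau$ exactly when it is within budget, i.e. $\Delta(x)=\tau$ for $x\in(\tau-t,\tau)$ and $\Delta(x)=x$ otherwise. This one fact governs both $\Delta_f$ (with $\tau=t_f$) and $\Delta_{\fhat}$ (with $\tau=t_{\fhat}$).

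Next I would compute the disagreement partition in the case $t_f<t_{\fhat}$. With $f(x)=\sign(x\ge t_f)$ and $\fhat(x)=\sign(x\ge t_{\fhat})$, the two classifiers disagree exactly where $f$ says positive but $\fhat$ says negative, so $S_{-1,1}=\emptyset$ and $S_{1,-1}=[t_f,t_{\fhat})$. By Theorem~\ref{thm:E_partition}, $E=E_{-1,1}\uplus E_{1,-1}$; since $S_{-1,1}=\emptyset$ forces $E_{-1,1}=\emptyset$, the enlargement set collapses to $E=E_{1,-1}=\{x:\Delta_f(x)\in S_{1,-1}\setminus\{x\},\ \Delta_{\fhat}(x)=x\}$.

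I would then translate the two defining conditions of $E_{1,-1}$ into intervals using the movement rule. The condition $\Delta_f(x)\in S_{1,-1}\setminus\{x\}$ says $x$ strictly moves under $f$; moving points land at $t_f$, and $t_f\in[t_f,t_{\fhat})=S_{1,-1}$, so this is equivalent to $x\in(t_f-t,t_f)$. The condition $\Delta_{\fhat}(x)=x$ says $x$ is stationary under $\fhat$, i.e. $x\notin(t_{\fhat}-t,t_{\fhat})$; on the relevant range $x<t_f<t_{\fhat}$ this reduces to $x\le t_{\fhat}-t$. Intersecting, $E=(t_f-t,t_f)\cap(-\infty,t_{\fhat}-t]$.

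The main obstacle is resolving which upper bound is binding and fixing the two endpoints. The claimed interval $[t_f-t,t_{\fhat}-t)$ is exactly this intersection precisely when $t_{\fhat}-t\le t_f$, equivalently $t_{\fhat}<t_f+t$; establishing this inequality from the standing assumptions is the crux, since if the thresholds are far apart ($t_{\fhat}>t_f+t$) the intersection instead shrinks to $(t_f-t,t_f)$, so the identity as stated must be read together with $t_{\fhat}<t_f+t$ (which holds in the operative near-optimal regime where $t_f$ and $t_{\fhat}$ are close, e.g. under realizability with small $\sigma$). The remaining work is a routine boundary check: using the lexicographic tie-breaking and the ``$\le 2$'' versus ``$<2$'' conventions in the definitions of $t$ and the feasible set $C_f(x)$ to decide membership of $t_f-t$ and $t_{\fhat}-t$. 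Since these are single points, they carry no Gaussian mass and hence do not affect the downstream use of the identity in Proposition~\ref{prop: POP linear classifiers sufficiency condition}.
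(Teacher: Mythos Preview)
Your approach is exactly what the paper intends: it too invokes Theorem~\ref{thm:E_partition} and simply asserts the observation without spelling out the computation, so your derivation via $S_{1,-1}=[t_f,t_{\fhat})$, $E_{-1,1}=\emptyset$, and the two interval conditions for $E_{1,-1}$ is a faithful (and more explicit) fill-in of what the paper leaves implicit.

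You are also right to flag the caveat. Your intersection $(t_f-t,t_f)\cap(-\infty,t_{\fhat}-t]$ agrees with the claimed $[t_f-t,t_{\fhat}-t)$ (up to endpoints) only under the extra hypothesis $t_{\fhat}\le t_f+t$; if the thresholds are far apart, the enlargement set caps at $t_f$ rather than $t_{\fhat}-t$. The paper does not state this side condition, and in the proof of Proposition~\ref{prop: POP linear classifiers sufficiency condition} there is no standing assumption forcing it. In the regime of Proposition~\ref{prop: POP necessary and sufficient} (realizable $h$, small $\sigma$, $\epsilon_2$ small relative to the total mass) it is natural, but as you note it is an additional ingredient the observation tacitly relies on. Your treatment of the endpoints is fine: the conventions for $t$ (via $\le 2$) versus $C_f$ (via $<2$) and the lexicographic tie-break make the two boundary points ambiguous, but they carry zero Gaussian mass and hence are irrelevant downstream.
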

Similar to the previous case, we have
$\pr_{x\sim D}\{x\in E\} = \Phi_{\sigma}(t_{\widehat{f}} - t) -  \Phi_{\sigma}(t_f-t)~, $
and again the proof follows from Thm.~\ref{thm:mainTheorem}.
\end{proof}

\textbf{Proposition 7.} \emph{Let $h$ be realizable and $\epsilon_2 > 2\epsilon_1$. Then, there exists $\sigma_0 >0$ such that for all $\sigma < \sigma_0$ it holds that:
\begin{equation}
\POP = \begin{cases}
 \Phi_{\sigma}(t_{\fhat} - t) - \Phi_{\sigma}(|t -t_f|) & \text{if } t_f < t_{\fhat}\\
 \Phi_{\sigma}(t_f) - \Phi_{\sigma}(|t-t_f|)
& \text{if } t_f > t_{\fhat}
\end{cases}
\end{equation}}
\begin{proof}
\begin{figure*}
\centering
    \begin{subfigure}{ }
    \includegraphics[width= 0.45 \columnwidth]{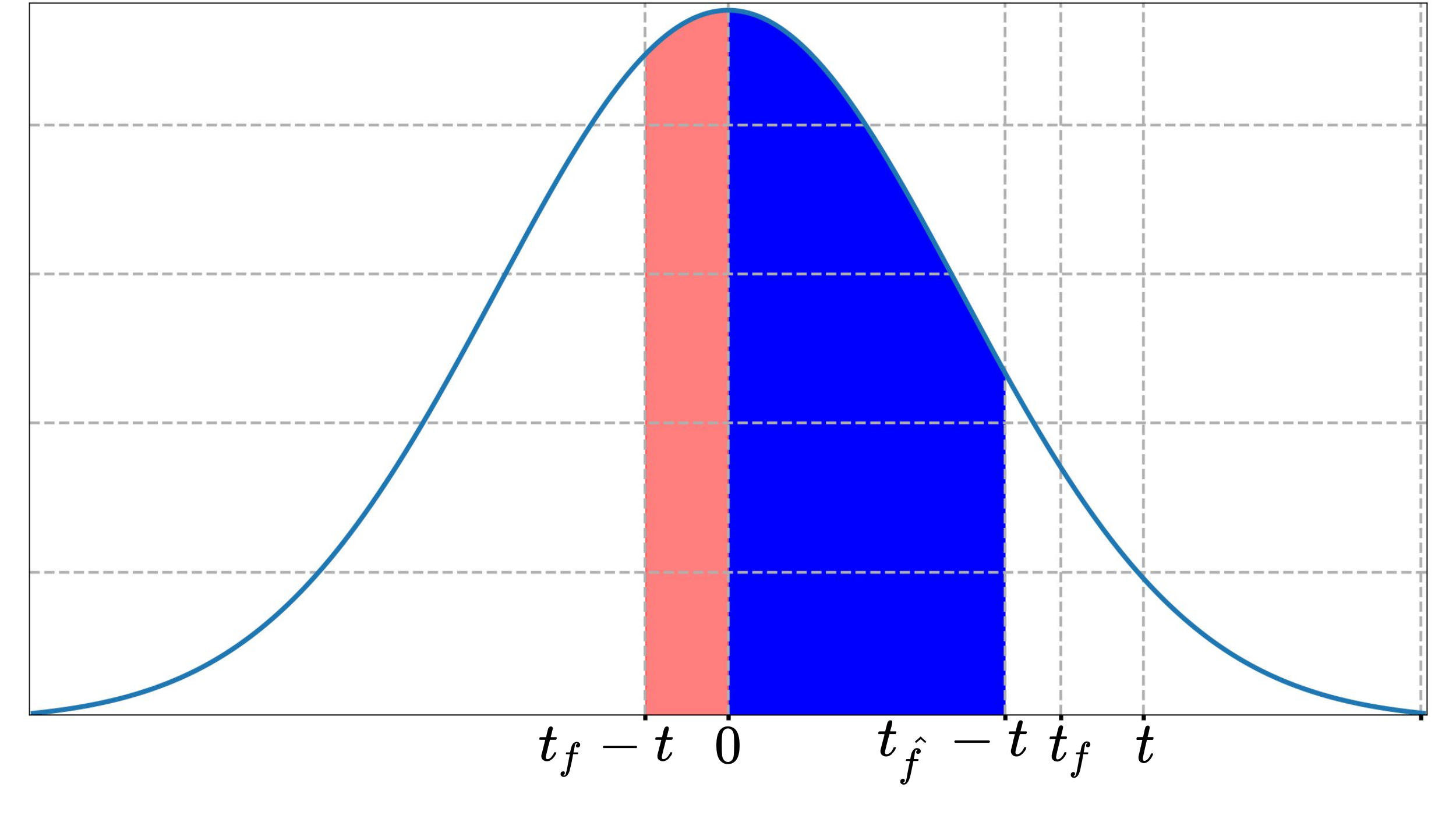}
    \end{subfigure}
        \begin{subfigure}{ }
        \includegraphics[width=0.45 \columnwidth]{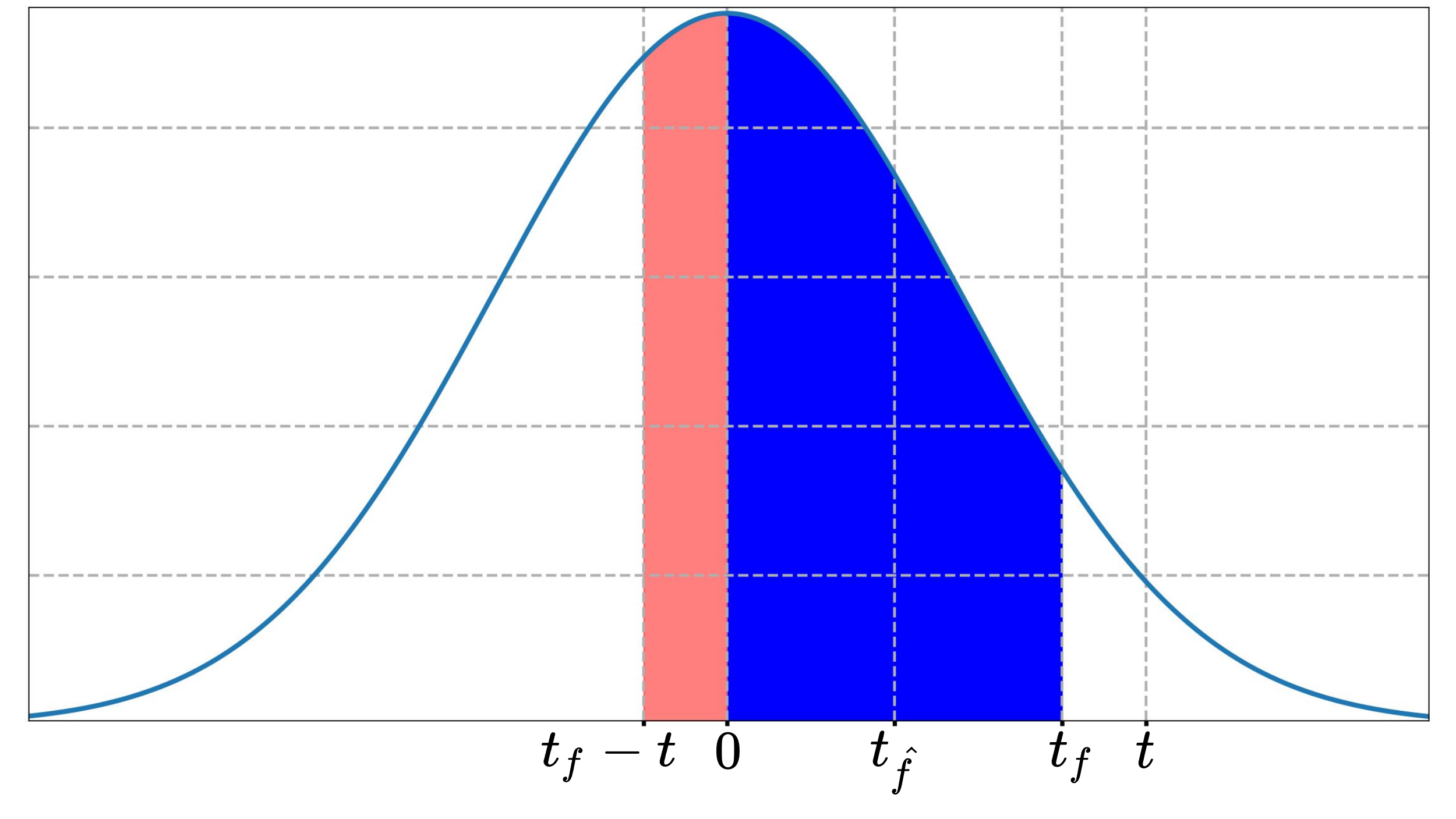}
    \end{subfigure}
    \caption{(\textbf{Left}) Set $E$ when, $t_f < t_{\fhat}$. Blue region represents $\POP^{+}$ and red region represents $\POP^{-}$. (\textbf{Right}) Set $E$ when $t_f > t_{\fhat}$. Blue color represents $\POP^{+}$ and red color represents $\POP^{-}$.  }
    \label{fig:E for threshold classifiers}
\end{figure*}
Since $h$ is realizable by a threshold classifier there is an $\alpha \in \mathbb{R}$ such that $h(x) = 1$ if and only if $x \geq \alpha$.
Without loss of generality, let $\alpha =0$ (otherwise we consider distribution $\mathcal{N}(\alpha,\sigma)$). Observe that, in this case the optimal classifier  $f^{\star}$ is defined as $\fstar(x) = 1$ for $x\geq t$ and $-1$ otherwise and satisfies  $\err(\fstar,\fstar) = 0$. Further, let $\epsilon_1 = \err(f,f)$, and $\epsilon_2 = \pr_{x\sim D}\{f(x) \neq \fhat(x)\}$. We define $\sigma_0 >0$ such that $\Phi_{\sigma_0}(\frac{t}{2}) = \frac{1}{2} + \epsilon_1$.


We show in the lemma below that $t_f > \frac{t}{2}$ for the chosen values of $\sigma$.
\begin{lemma}
\label{lem: t_f > t/2}
For all $0<\sigma < \sigma_{0}$, $t_f > \frac{t}{2}>0.$
\end{lemma}
\begin{proof}
Suppose for contradiction $t_f \leq \frac{t}{2}$. Then $t- t_{f} \geq \frac{t}{2}$.   
 This implies
\begin{align*}
    \epsilon_1 &:=  \err(f,f) =  \frac{1}{2} - \Phi_{\sigma}(t_f - t) \\  & \underset{(i)}{=}   \Phi_{\sigma}(t-t_f) - \frac{1}{2}
                \underset{(ii)}{\geq}   \Phi_{\sigma}(\frac{t}{2}) - \frac{1}{2} \underset{(iii)}{>} \Phi_{\sigma_0}(\frac{t}{2}) - \frac{1}{2}  =\epsilon_1
\end{align*}
In the above, $(i)$ follows from the symmetry property of Normal distribution, $(ii)$ is an immediate consequence of monotonicity property of CDF function $\Phi(.)$ and finally $(iii)$ follows from the choice of $\sigma$.  This completes the proof.
\end{proof}
Hence, we assume throughout the proof $t_f > \frac{t}{2}> 0$. Now we determine \POP\ and show that it greater than $0$ for the two cases.

\textbf{Case 1  ($t_{f} < t_{\widehat{f}}$): } The enlargement set $E = [t_f-t, t_{\widehat{f}} - t]$ (determined in the proof of Prop. \ref{prop: POP linear classifiers sufficiency condition}, see Obs.~\ref{observation: E when t_f < t_fhat}). Now in Obs.~\ref{observation: t < t_fhat}, we show that $t <t_{\widehat{f}}$ in this case.
\begin{observation}\label{observation: t < t_fhat}
If $t_{f} < t_{\widehat{f}}$ then $ t < t_{\widehat{f}} $.
\end{observation}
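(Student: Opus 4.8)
The plan is to argue by contradiction. Assume the hypothesis $t_f < t_{\widehat{f}}$ holds but the conclusion fails, i.e. $t_{\widehat{f}} \le t$, and derive $\epsilon_2 \le 2\epsilon_1$, which contradicts the standing assumption $\epsilon_2 > 2\epsilon_1$. Note that under this assumption we automatically get $t_f < t_{\widehat{f}} \le t$, so both thresholds lie below $t$; this is exactly what lets me write each error quantity as a clean Gaussian increment. First, since the disagreement region of $f$ and $\widehat{f}$ is precisely the interval $[t_f, t_{\widehat{f}})$, I have $\epsilon_2 = \Phi_{\sigma}(t_{\widehat{f}}) - \Phi_{\sigma}(t_f)$, so monotonicity of $\Phi_{\sigma}$ together with $t_{\widehat{f}} \le t$ gives the upper bound $\epsilon_2 \le \Phi_{\sigma}(t) - \Phi_{\sigma}(t_f)$. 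Second, because $t_f < t$, the points $f$ misclassifies after movement form the interval $[t_f - t, 0)$ (a truly negative point at $x<0$ is pushed to a positive label iff $x \ge t_f - t$), so $\epsilon_1 = \tfrac12 - \Phi_{\sigma}(t_f - t)$, which by symmetry of the centered Gaussian rewrites as $2\epsilon_1 = 2\Phi_{\sigma}(t - t_f) - 1$.

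The crux then reduces to the purely analytic inequality $\Phi_{\sigma}(t) - \Phi_{\sigma}(t_f) \le 2\Phi_{\sigma}(t - t_f) - 1$. I would prove it by reading both sides as probabilities under $X \sim \mathcal{N}(0,\sigma)$. Writing $b := t - t_f$, the left-hand side is $\pr\{t_f \le X \le t\}$, a mass over a width-$b$ interval sitting in the positive region (here $t_f > 0$ by Lemma~\ref{lem: t_f > t/2}), while the right-hand side is $\pr\{-b \le X \le b\}$, a mass over the width-$2b$ interval centered at the mode. Unimodality and symmetry of the Gaussian density $\phi_{\sigma}$ then give the chain $\pr\{t_f \le X \le t\} \le \pr\{0 \le X \le b\} \le \pr\{-b \le X \le b\}$: the first step because $[t_f, t_f+b]$ is a rightward shift of $[0,b]$ into a region of no-larger density (formally $\phi_{\sigma}(t_f + s) \le \phi_{\sigma}(s)$ for $s \in [0,b]$ since $t_f \ge 0$ and $\phi_{\sigma}$ is non-increasing on $[0,\infty)$), and the second trivially. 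Chaining with the two displays above yields $\epsilon_2 \le \Phi_{\sigma}(t) - \Phi_{\sigma}(t_f) \le 2\epsilon_1$, the desired contradiction, and hence $t < t_{\widehat{f}}$.

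The only genuine obstacle is the density-comparison step, and it is precisely where the two hypotheses enter: positivity of $t_f$ (supplied by Lemma~\ref{lem: t_f > t/2}, which gives $t_f > t/2 > 0$) is what places the narrower interval entirely on the decreasing side of the density so the shift estimate goes through, and the strict separation $\epsilon_2 > 2\epsilon_1$ is exactly what the final chain of inequalities contradicts. Beyond that I expect only routine bookkeeping; the whole argument collapses to a single shift-and-unimodality estimate, which is why the statement is fittingly labeled an \emph{observation} rather than a lemma.
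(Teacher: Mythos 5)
Your proof is correct and follows essentially the same route as the paper's: a contradiction argument that assumes $t_{\widehat{f}} \le t$, writes $\epsilon_2 = \Phi_{\sigma}(t_{\widehat{f}}) - \Phi_{\sigma}(t_f)$ and $\epsilon_1 = \tfrac12 - \Phi_{\sigma}(t_f - t)$, and uses $t_f > 0$ (from Lemma~\ref{lem: t_f > t/2}) together with the decreasing Gaussian density on $[0,\infty)$ to shift the disagreement interval to the origin and contradict $\epsilon_2 > 2\epsilon_1$. The only cosmetic difference is that you relax to $2\epsilon_1$ via the symmetric interval $[-(t-t_f),\,t-t_f]$, whereas the paper stops at the sharper intermediate bound $\epsilon_2 < \epsilon_1$; your middle quantity $\pr\{0 \le X \le t-t_f\}$ is exactly $\epsilon_1$, so the two arguments coincide.
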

\begin{proof}
Note that $\epsilon_1 = \frac{1}{2} - \Phi_{\sigma}(t_{f}-t)$. Further, from definition $\epsilon_2 = \Phi_{\sigma}(t_{\widehat{f}}) - \Phi_{\sigma}(t_f)$. Suppose $t \geq t_{\widehat{f}}$. Then as $t_{f}>0$
\begin{equation*}
  \Phi_{\sigma}(t_{\widehat{f}}) - \Phi_{\sigma}(t_f)~ <~ \Phi_{\sigma}(t-t_f) - \Phi_{\sigma}(0)
  ~=~  \epsilon_1
\end{equation*}
If $\epsilon_2<\epsilon_1< 2\epsilon_1$ then this contradicts the assumption that $\epsilon_2 \geq 2\epsilon_1$.
\end{proof}
Hence,  either $t_{f} < t < t_{\widehat{f}}$ or $t< t_f < t_{\widehat{f}}$. First, we analyse the case when $t_{f} < t < t_{\widehat{f}}$. Recall that the price of opacity is defined as  $\POP = \err(f, \widehat{f}) - \err(f,f)$. Observe that for all $x\in [0,t_{\widehat{f}} - t)$, $h(x) = f(\Delta_f(x))$  and for all $x\in [t_f-t,0)$, $h(x) \neq f(\Delta_f(x))$. In particular, the points in $[0,t_{\widehat{f}} - t)$ contribute positively to the price of opacity, and the points in $[t_f-t,0)$ contribute negatively to the price of opacity. Hence,
\begin{align*}
\POP &= (\Phi_{\sigma}(t_{\widehat{f}}- t) - \Phi_{\sigma}(0)) - (\Phi_{\sigma}(0) - \Phi_{\sigma}(t_f-t))\\
   &= (\Phi_{\sigma}(t_{\fhat} - t )  - \Phi_{\sigma}(0)) - (\Phi_{\sigma}(t - t_{f}) - \Phi_{\sigma}(0))\\
    & = \Phi_{\sigma}(t_{\fhat} - t) - \Phi_{\sigma}(t -t_f)
\end{align*}
Finally, in this case $\POP$ is at least $0$ follows from the following observation.
\begin{observation}
$t_{\fhat} - t > t - t_f$
\end{observation}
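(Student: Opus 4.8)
The plan is to argue by contradiction within the subcase $t_f < t < t_{\fhat}$ currently under consideration (in the complementary subcase $t < t_f < t_{\fhat}$ the inequality is immediate, since then $t - t_f < 0 < t_{\fhat} - t$). The two ingredients I would extract first are the mass identities that realizability already supplies. Since $\err(\fstar,\fstar)=0$ and $h$ is the threshold at $0$, the proof has already established $\epsilon_1 = \tfrac12 - \Phi_{\sigma}(t_f-t) = \Phi_{\sigma}(t-t_f) - \tfrac12$, so that $2\epsilon_1 = \Phi_{\sigma}(t-t_f) - \Phi_{\sigma}\big(-(t-t_f)\big)$ is exactly the $D$-mass of the symmetric window $[-(t-t_f),\,t-t_f]$ about the mean. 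Likewise $\epsilon_2 = \Phi_{\sigma}(t_{\fhat}) - \Phi_{\sigma}(t_f)$ is the $D$-mass of $[t_f,\,t_{\fhat}] = [\,t-(t-t_f),\; t+(t_{\fhat}-t)\,]$, a window of the same shape but recentered at $t$.

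To set up the contradiction, write $a := t-t_f > 0$ and $b := t_{\fhat}-t > 0$, and suppose toward a contradiction that $b \le a$, i.e.\ that the claimed inequality fails. Then $[t-a,\,t+b] \subseteq [t-a,\,t+a]$, so monotonicity of $\Phi_{\sigma}$ gives $\epsilon_2 \le \Phi_{\sigma}(t+a) - \Phi_{\sigma}(t-a)$. I would define $g(s) := \Phi_{\sigma}(s+a) - \Phi_{\sigma}(s-a)$, the mass of the width-$2a$ window centered at $s$, and note that $g(0) = \Phi_{\sigma}(a) - \Phi_{\sigma}(-a) = 2\epsilon_1$ by the identity above. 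The crux is to show $g$ is strictly decreasing for $s>0$: writing $\phi_{\sigma} := \Phi_{\sigma}'$ for the symmetric, unimodal Gaussian density, one has $g'(s) = \phi_{\sigma}(s+a) - \phi_{\sigma}(s-a)$, and for every $s>0$ the elementary bound $|s-a| < s+a$ combined with unimodality yields $\phi_{\sigma}(s-a) = \phi_{\sigma}(|s-a|) > \phi_{\sigma}(s+a)$, hence $g'(s) < 0$. Since $t>0$, this gives $g(t) < g(0) = 2\epsilon_1$, and therefore $\epsilon_2 \le g(t) < 2\epsilon_1$, contradicting the standing hypothesis $\epsilon_2 > 2\epsilon_1$. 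Thus $b > a$, which is precisely $t_{\fhat}-t > t-t_f$.

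I expect the main obstacle to be the mass-comparison step, namely the assertion that a window of fixed width $2a$ captures strictly less Gaussian mass as its center moves away from the mean. This is exactly where the symmetry and unimodality of $D = \mathcal{N}(\alpha,\sigma)$ are indispensable (for a general $D$ the claim can fail), and it is what forces reliance on the hypothesis $\epsilon_2 > 2\epsilon_1$ rather than on the weaker Lemma~\ref{lem: t_f > t/2} alone: one can choose thresholds with $t/2 < t_f < t < t_{\fhat}$ for which $t_{\fhat}-t < t-t_f$, so $t_f > \tfrac{t}{2}$ by itself cannot yield the observation. I would present the monotonicity of $g$ through its derivative as above, since it is cleaner than integrating the density over shifted intervals and makes transparent that the only property of the center the argument consumes is $t > 0$.
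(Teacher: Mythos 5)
Your proof is correct and is essentially the paper's own argument in contrapositive form: both hinge on exactly the same two ingredients, the identities $\epsilon_1 = \Phi_{\sigma}(t-t_f)-\tfrac{1}{2}$ and $\epsilon_2 = \Phi_{\sigma}(t_{\fhat})-\Phi_{\sigma}(t_f)$, and the fact that a window of width $2(t-t_f)$ centered at the Gaussian mean captures strictly more mass than the same window recentered at $t>0$. The only differences are cosmetic---the paper runs the chain directly (obtaining $\Phi_{\sigma}(t_{\fhat}) > \Phi_{\sigma}(2t-t_f)$ and inverting $\Phi_{\sigma}$) and asserts the mass-comparison step with a one-line remark, whereas you argue by contradiction and supply the derivative/unimodality justification for that step, which the paper leaves implicit.
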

\begin{proof}
By assumption we have $\epsilon_2 \geq 2\epsilon_1$. This implies that
\begin{align*}
    \Phi_{\sigma}(t_{\fhat}) & \geq \Phi_{\sigma}(t_f) + 2 (\Phi_{\sigma}(t - t_f) -1/2)\\
    & > \Phi_{\sigma}(t_f) + \Phi_{\sigma}(2t - t_f) - \Phi_{\sigma}(t_f) \\
    & = \Phi_{\sigma}(2t - t_f)
\end{align*}The second line in the above equation follows for any $\sigma > 0$ as $0< t_f < t < t_{\fhat}$. This completes the proof.
\end{proof}
Similarly, when $t< t_f < t_{\widehat{f}}$, it can be shown that $E =$  $[t_f-t,$ $t_{\widehat{f}}-t)$, and $\POP = \Phi_{\sigma}(t_{\widehat{f}}-t) - \Phi_{\sigma}(t_{f}-t) > 0$.\\

\textbf{Case 2 ($t_f > t_{\widehat{f}}$):} The set $E = [t_{f}-t, t_f)$ in this case (see Observation \ref{observation: E when t_f > t_fhat}). If $t_f > t$ then it is easy to show that $\POP = \Phi_{\sigma}(t_f) - \Phi_{\sigma}(t_f-t) > 0$. Next we analyse the price of opacity for $t_f<t$.
Observe that for all $x\in [0,t_f)$ $h(x) = f(\Delta_f(x))=1$, and for all $x\in [t_{f}-t,0)$, $h(x) = -1$ and $f(\Delta_f(x))=1$. In particular the points in $[0,t_f)$ contribute positively to the price of opacity, whereas the points in $[t_{f}-t,0)$ contribute negatively to the price of opacity (see Fig.~\ref{fig:E for threshold classifiers}). Hence,
\begin{align*}
\POP &= (\Phi_{\sigma}(t_f) - \Phi_{\sigma}(0)) - (\Phi_{\sigma}(0) - \Phi_{\sigma}(t_f-t))\\
&= (\Phi_{\sigma}(t_f) - \Phi_{\sigma}(0)) - (\Phi_{\sigma}(t-t_f) - \Phi_{\sigma}(0))\\
&= \Phi_{\sigma}(t_f) - \Phi_{\sigma}(t-t_f)~.
\end{align*}
Since $t_f >\frac{t}{2}$ (from Lem.~\ref{lem: t_f > t/2}), $t_f > t-t_f$, and hence, $\POP>0$.
\end{proof}
\textbf{Corollary 6.} \emph{Suppose $h$ is realizable, $t_f < t$, and $\epsilon_2 > 2\epsilon_1$. Then there exists $\sigma_0 >0$ such that for all $\sigma < \sigma_0$, $\POP > 0$ if and only if $\pr_{x\sim D}\{E\} > 2\epsilon_1$.}
\begin{proof}
Similar to Prop. \ref{prop: POP necessary and sufficient}, without loss of generality we have $h(x) = 1$ if and only if $x \geq \alpha$.
Hence, the optimal classifier  $f^{\star}$ is defined as $\fstar(x) = 1$ for $x\geq t$ and $-1$ otherwise and satisfies  $\err(\fstar,\fstar) = 0$.
It follows from Thm.~\ref{thm:mainTheorem} that if $\pr_{x\sim D}\{E\} > 2\epsilon_1$ then $\POP > 0$. Now we prove the other direction.

In Props. \ref{prop: POP linear classifiers sufficiency condition} and \ref{prop: POP necessary and sufficient} we showed that if $t_f> t_{\fhat}$ then
$$\pr_{x\sim D}\{x\in E \}  = \Phi_{\sigma}(t_f) -  \Phi_{\sigma}(t_f-t)$$
$$\POP = \Phi_{\sigma}(t_f) - \Phi_{\sigma}(t-t_f)\ .$$
Similarly, we also showed that if $t_f <  t_{\widehat{f}}$
$$\pr_{x\sim D}\{x\in E \}  = \Phi_{\sigma}(t_{\fhat}-t) -  \Phi_{\sigma}(t_f-t)$$
$$\POP = \Phi_{\sigma}(t_{\fhat}-t) - \Phi_{\sigma}(t-t_f)\ .$$
From the two equations in each case we conclude the following holds in both the cases:
$$ \POP = \pr_{x\sim D}\{x\in E\} + \Phi_{\sigma}(t_f-t) - \Phi_{\sigma}(t-t_{f})\ .$$
Also, note that $\epsilon_1 = 1/2 -\Phi_{\sigma}(t_f-t)$. If $\POP > 0$ then
\begin{align*}
    \pr_{x\sim D}\{x\in E\} + \Phi_{\sigma}(t_f-t) - \Phi_{\sigma}(t-t_{f}) &> 0 \\
    \pr_{x\sim D}\{x\in E\} &>  \Phi_{\sigma}(t-t_{f}) - \Phi_{\sigma}(t_f-t) \\
    &> 1 - 2\Phi_{\sigma}(t_f-t) = 2\epsilon_1
\end{align*}
The first inequality in the last line of the above equation follows from the property of normal distribution centered at $0$.
\end{proof}

\textbf{Corollary \ref{cor: POP  linear classifiers}.} \emph{Suppose $ \epsilon_2 >  2\sqrt{ \log \frac{4}{\delta} / n }$, where $\delta \in (0,1)$. Then  with probability at least $1-\delta$, $\POP$ is as in Eq.~\eqref{eqn: POP characterization equation}.}
\begin{proof}
As the strategic VC dimension of the threshold classifier for an admissible cost function is at most $2$, from the strategic-PAC generalization error bound we have with probability at most $\delta$ (over the samples in $T_J$), $\err(f,f)  = \epsilon_1 \leq \sqrt{\frac{\log \frac{8}{\delta}}{n}}$. Using this the corollary follows from Prop. 7.
\end{proof}

\section{Experiments} \label{app:experiments}
\label{apndx:experiments}

\subsection{Synthetic experiments: additional results}

\subsubsection{Multi-Variate Normal Distribution}
Our theoretical results in Sec. \ref{sub:linear} consider a 1D Normal distribution.
In this section we empirically study \POP\ under a split multi-variate Normal
distribution with means $\mu=$ and diagonal covariance matrix
$\Sigma = 2I$.
We set the cost to be linear-separable with weight vector $\alpha=(1,0,\dots,0)$.
We measure pop for increasing dimensionality $d$.
Across $d$ we fix $\epsilon_d$ by choosing $m(d)$
for which the error is at most 0.01 w.p. at least 0.95 over
sample sets (i.e., as in PAC),
and to simplify,
mimic a setting with large $n$ by fixing $f=f^*$
(which thresholds along the $x_1$ axis at 2).
Results for were averaged over 50 random draws of $n=100$ test points
moved by a learned $\fhat$.

Figure \ref{fig:negpop_mvn} (left, center) shows results for increasing $d$.
As can be seen \POP\ remains quite large even for large $d$.
Interestingly, \POP\ begins lower for $d=1$ (the canonical case),
then increases abruptly at $d=2$,
only to then slowly decrease until plateauing
at around $d=150$.
Interestingly, the low \POP\ at $d=1$ is due to high variance:
roughly half of the time $\POP \approx 0.35$,
whereas the other half $\POP=0$. This behavior virtually diminishes for $d>1$.

\subsubsection{Negative POP}
As noted in Sec. \ref{sub:POP}, \POP\ can also be negative.
Fig. \ref{fig:negpop_mvn} (right) presents such a case,
in which we used a mixture of two 1D Normals
centered at 0 and 1 and with diagonal unit covariances.
Here, as $m$ increases, $\fhat$ better estimates $f$,
but \POP\ remains significantly negative throughout.

\begin{figure}[t!]
	\centering
	\includegraphics[width=0.34\columnwidth]{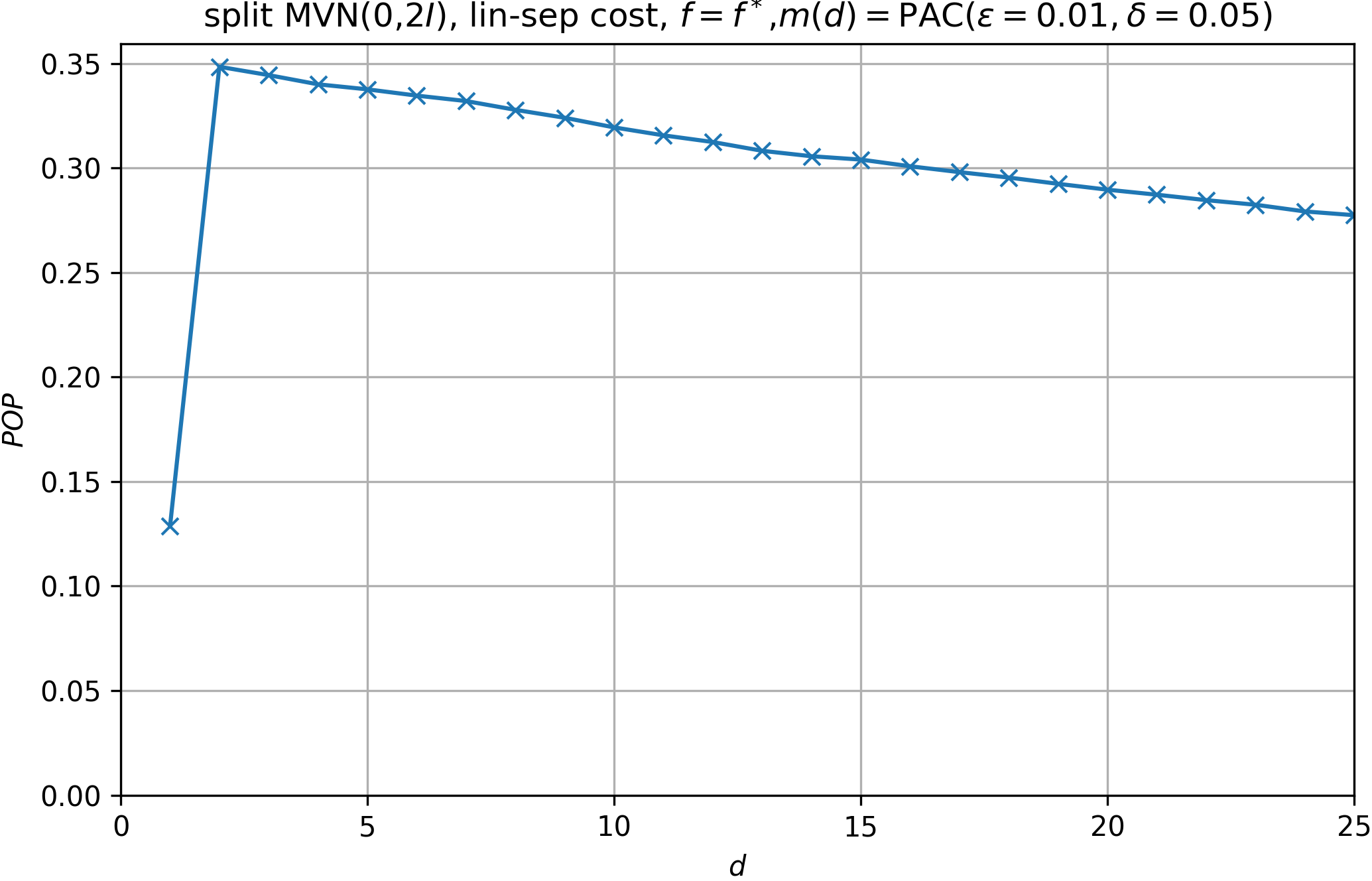}
	\includegraphics[width=0.34\columnwidth]{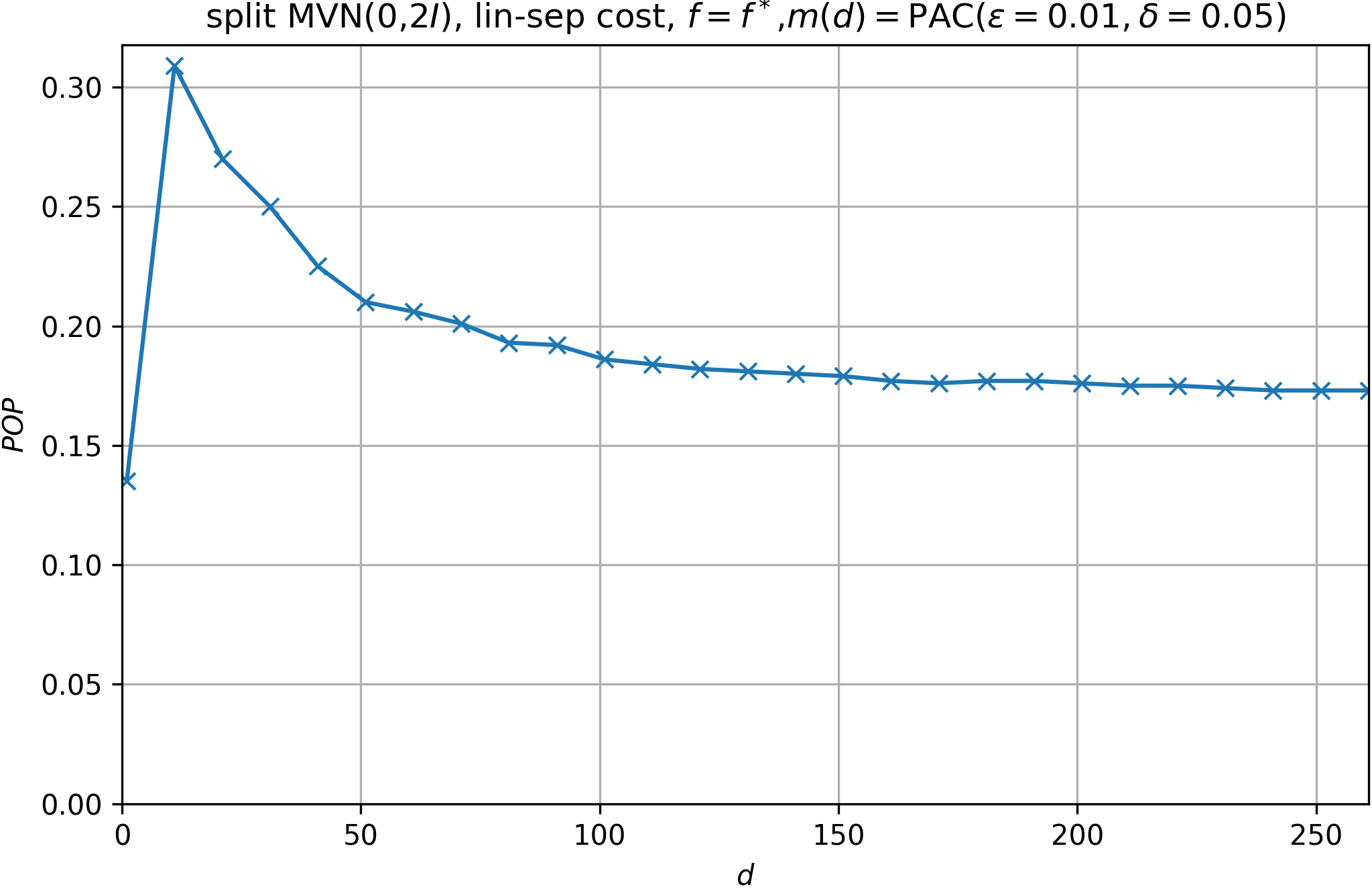}\,
	\includegraphics[width=0.28\columnwidth, height=3cm]{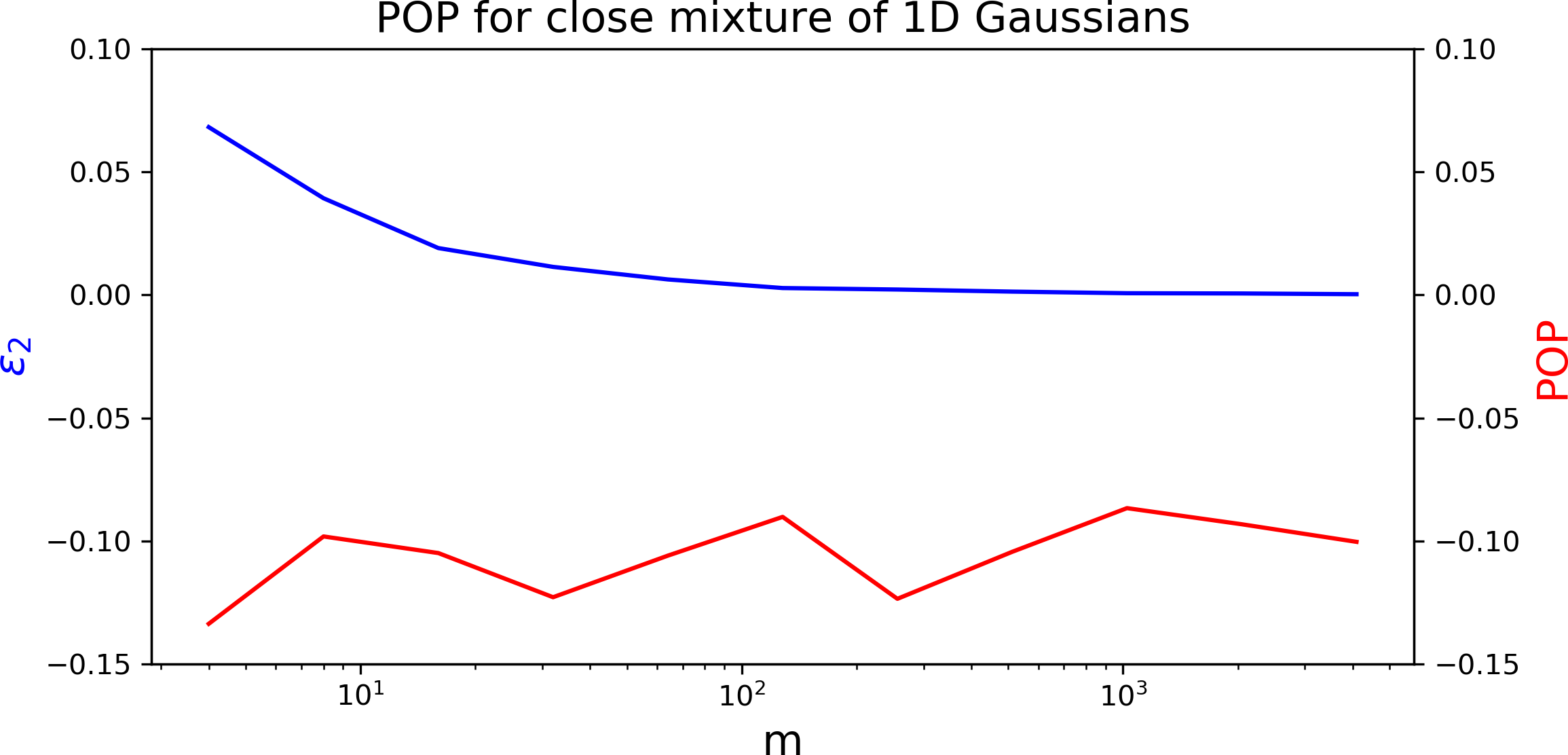}
	\caption{
	(\textbf{Left, center})
	\POP\ for split MVNs
	of increasing dimension $d$ (center: high-resolution for small $d$;
	right: low-resolution for large $d$).
	(\textbf{Right}) Mixture of 1D Normals giving strictly negative \POP.
	}
	\label{fig:negpop_mvn}
\end{figure}



\subsection{Loans experiment: details}

\subsubsection{Data}
Our experiments makes use of data cross-referenced from
two distinct datasets related to the Prosper.com peer-to-peer loans platform.
The first dataset\footnote{\url{https://www.kaggle.com/yousuf28/prosper-loan}} includes $n=113,937$ examples
having $d=81$ features (not all useful for classification purposes; see below).
Examples include only loan requests that have been approved,
i.e., that sufficient funds were allocated by borrowers on the platform
to support the request.
This dataset has informative features that are useful for predictive purposes,
but does not include any information on social connections.
To complement this,
we use a second dataset\footnote{Provided to us upon request by the authors of
\citet{kumarnetwork}.}, which includes a full data-dump
of all data available on the platform, including: loan request (both approved and disapproved),
bid history, user profiles, social groups, and social connections.\footnote{The data dump is dated to 2009, shortly before Prosper.com underwent significant changes.
These included changes in
how loan requests are processed and attended,
how credit risk is evaluated by the platform,
and how users can interact.
As part of these changes,
Prosper.com discontinued public access to its API
for obtaining data.}
This dataset is richer in terms of available data types,
but is missing several highly informative features that
appear in the first dataset.
Hence, we take the intersection of both datasets
(entries were matched based on user-identifying member keys).
Our merged dataset includes $n=20,222$ examples of loan requests
that are described by informative features and whose
corresponding users
can be linked to others within the social network.


\paragraph{Labels.}
Labels in our data are determined based on a loan request's
\emph{credit grade} (\cg),
a feature indicating the risk level of the request
as forecasted by the platform (for which they use proprietary
tools and information that is not made public).
Values of \cg\ are in the range AA,A,B,C,D,E,HR, with each level
corresponding to a risk value indicating the likelihood of default.\footnote{\url{https://www.prosper.com/invest/how-to-invest/prosper-ratings/?mod=article_inline}}
Because \cg\ is essentially a prediction of loan default made by
the platform, we use it as labels, which we binarize via $y=\1{\text{\cg} \ge B}$ to obtain a roughly balanced dataset.
In this way, learning $f$ mimics the predictive process
undertaken by the platform and for the same purposes
(albeit with access to more data).

\paragraph{Features.}
Although the data includes many features, most of them are redundant
for predictive purposes, others contain many missing entries,
and yet others are linked directly to $y$ in ways that make them
inappropriate to use as input (e.g., borrower rate, which is fixed by the platform using a one-to-one mapping from \cg, makes prediction trivial).
Due to this,
and because we also study cases in which
the number of user-held examples $m$ is small,
we choose to work with a subset of six features,
chosen through feature selection for informativeness and relevance.
available credit, amount to loan, percent trades not delinquent, bank card utilization, total number of credit inquiries, credit history length.
These remain sufficiently informative of $y$:
as noted, a non-strategic linear baseline (on non-strategic data)
achieves 84\% accuracy.
Fig.~\ref{fig:prosper1} (inlay) shows that errors in
the estimation of $\fhat$ (i.e., $\epsilon_2$) are manageable even for small $m$.

\paragraph{Social network.}
In its earlier days, the Prosper.com platform included
a social network that allowed users to form social ties
beyond those derived from financial actions.
In our final experiment in Sec. \ref{sec:experiments},
we use the social network to determine the set of examples available
for each user $x$ for training $\fhat_x$,
and in particular, include in the sample set all loan requests
of any user that is at most at distance two from $x$
(i.e., a friend or a friend-of-a-friend).
The social network is in itself quite large;
however, since we have informative features only for loan requests
that have been approved (see above),
in effect we can only make use of a smaller subset of the network
which includes 994 users
that have at least one social connection.



\subsubsection{Experimental details}

\paragraph{Preprocessing.}
All features were standardized (i.e., scaled to have zero mean
and unit standard deviation).

\paragraph{Training.}
The data was split 85-15 into train and test sets,
respectively.
The train set was further partitioned for tuning the amount of regularization, but optimal regularization proved to be negligible,
and final training was performed on the full train set.
Evaluation was done on points in the held-out test set,
and for settings in which $\fhat$ is used,
training sets for $\fhat$ were sampled from the train set
and labeled by $f$.
As noted, rejection sampling was used to ensure that
sample sets include at least one example from each class.

\paragraph{Learning algorithms.}
For the non-strategic baseline we used the scikit-learn
implementation of SVM.
For the strategic learning algorithm we used our own implementation
of the algorithm of \citet{hardt2016strategic}.

\paragraph{Computational infrastructure.}
All experiments were run on a single laptop
(Intel(R) Core(TM) i7-9850H CPU 2.6GHz, 2592 Mhz, 6 core, 15.8GB RAM).

\paragraph{Code.}
Code is publicly available at
\url{https://github.com/staretgicclfdark/strategic_rep}



\subsubsection{Visualization}
For visualization purposes, we embed points $x\in\R^6$ in $\R^2$.
Our choice of embedding follows from the need to depict
relations between the position of points in embedded space
and how they are classified, i.e., their value under $f(x)$.
To achieve this, we embed by partitioning features into two
groups, $A_1,A_2 \subset \{1,\dots,6\}$, and decompose
$f(x)=f_1(x)+f_2(x)$ where $f_1$ operates
only on the subset of features in $A_1$ and $f_2$
on features in $A_2$. We then set the embedding $\rho$ to be $\rho(x)=(f_1(x),f_2(x))$, i.e.,
values for the x-axis correspond to $f_1$,
and values for the y-axis correspond to $f_2$.
In this way, embedded points lie above the line $f(x)=0$
iff they are classified as positive
(note that this holds for any partitioning of features).
However, such a projection does not preserve
distances or directions, and so the distance of points
from $f$ and the direction and distance of point movement
do not necessary faithfully represent those in the original feature space.


\begin{figure}[t!]
	\centering
	\includegraphics[width=0.45\columnwidth]{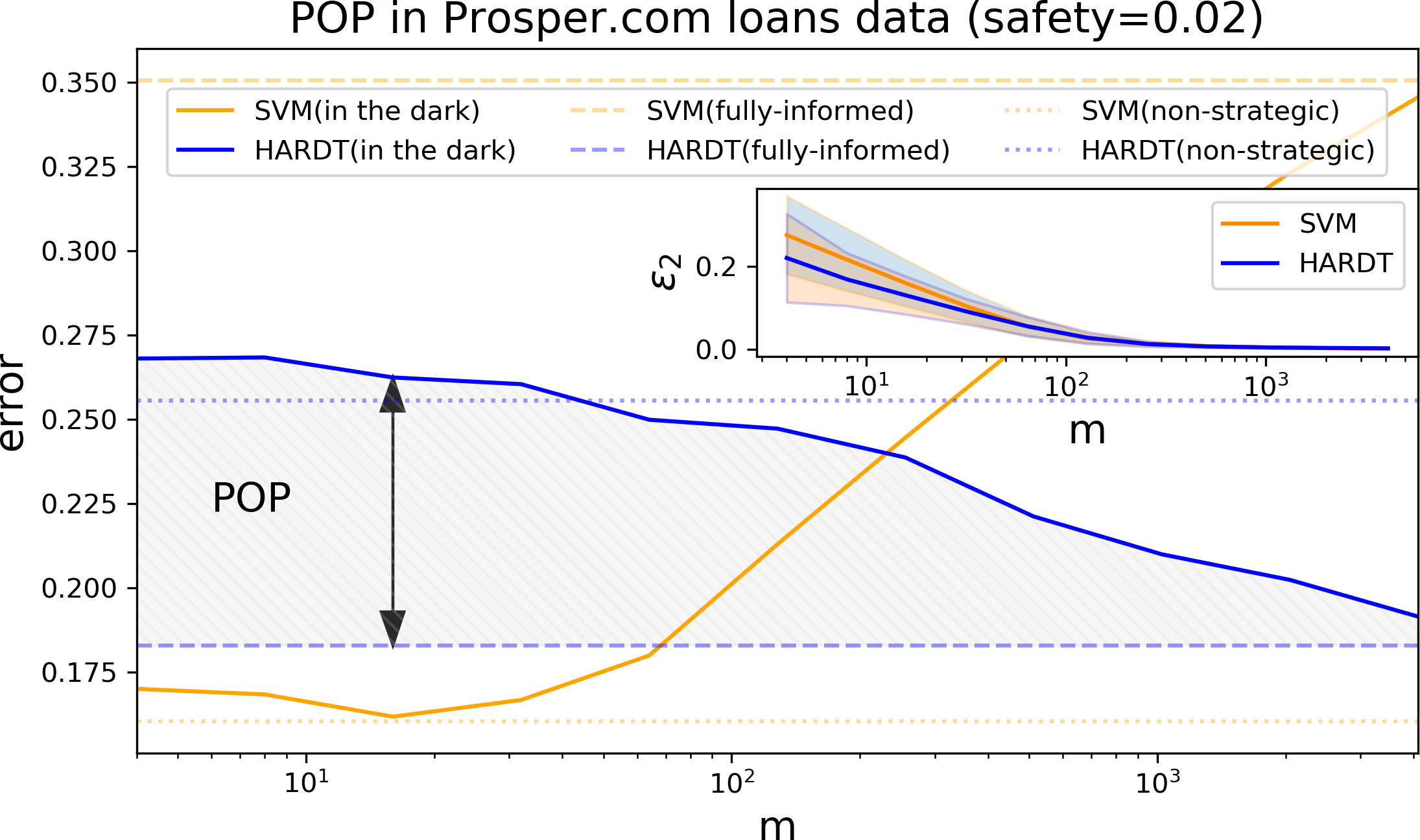}
	\qquad
	\includegraphics[width=0.45\columnwidth]{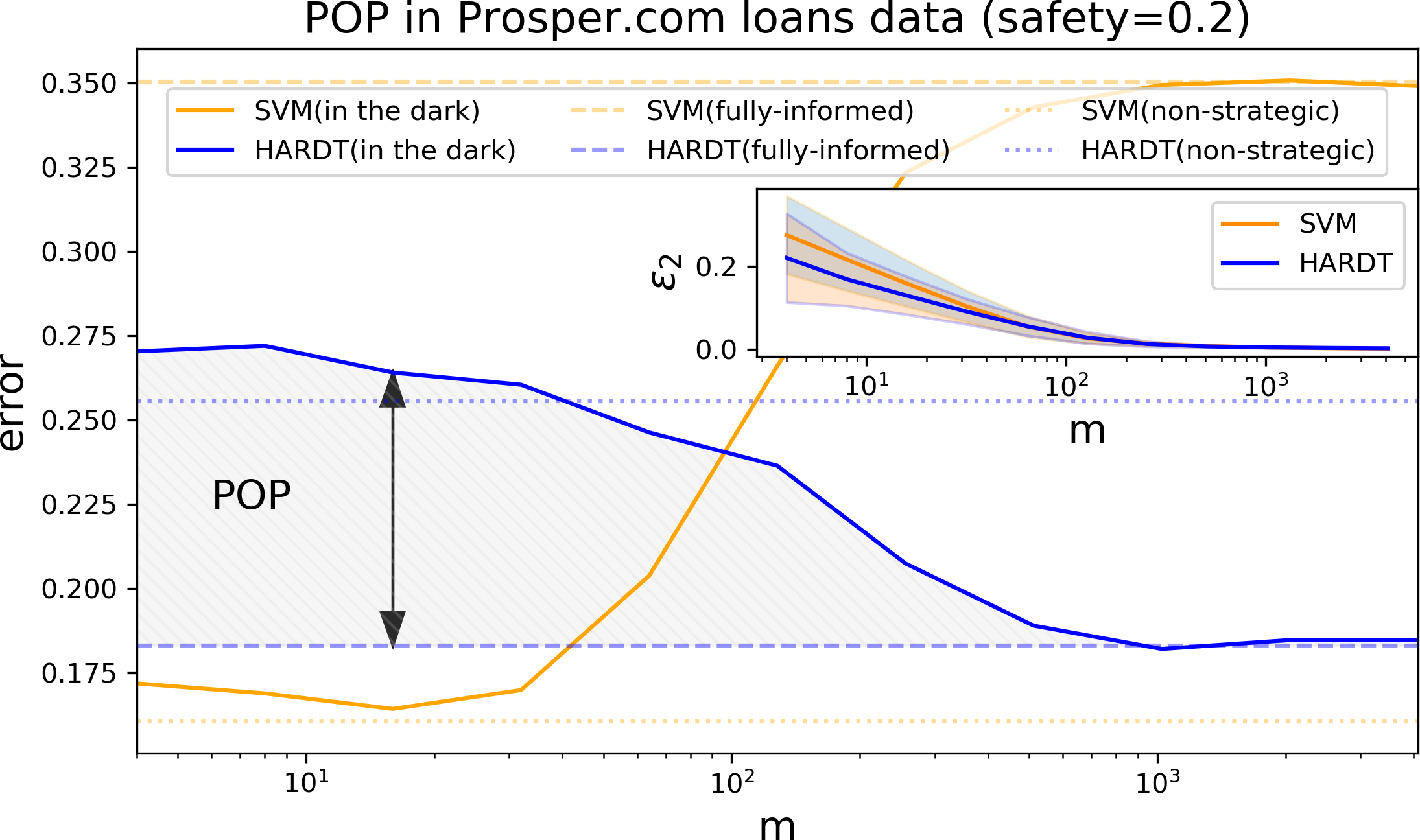}
	\caption{\POP\ and estimation error for $\fhat$ ($\epsilon_2$)
	in the safe Contestant setting,
	for safety budgets $k=0.02$ (1\% of the total budget) and $k=0.2$ (10\% of the total budget).
	}
	\label{fig:safety_pop}
\end{figure}

\subsection{Loans experiment: additional results}\label{secappendix: loan experiment addtional result}
Our main results relate to a Contestant playing $\fhat$ exactly.
Here we empirically study a variation on this model.
If Contestant knows that $\fhat$ only approximates $f$,
she may be willing to make up for possible errors in estimation
by incurring an additional cost.
We refer to such a Contestant as \emph{safe},
and model her behavior as follows.
First, Contestant computes her best-response w.r.t. $\fhat$,
denoted $x'$.
This provides a direction of movement $r = x'-x$.
Recalling that the distance of movement was set to minimize the cost,
the safe Contestant then invests an additional
$k$ units of cost
(at most, without exceeding her overall cost budget of 2)
to move further in the same direction $r$.

Figure \ref{fig:safety_pop} show \POP\ behavior for the safe Contestant setting
for `safety' budgets $k=0.02$ (1\% of the total budget) and $k=0.2$ (10\% of the total budget). As can be seen, the main trends in the results match those
in Sec. \ref{sec:exp_loans} (Fig. \ref{fig:prosper1}).
Notice that for large $m$, \POP\ is now smaller,
as the increased distances can now move points beyond
the region of disagreement.
However, closing the \POP\ gap requires many samples ($m \approx 1,000$)
and a large safety budget (10\% of total).
Figure \ref{fig:safety_social_ineq} shows social inequality measures
for the safe Contestant setting,
with matching results indicating that large $m$ and $k$
are necessary to maintain social equality.

\begin{figure}[t!]
	\centering
	\includegraphics[width=0.3\columnwidth]{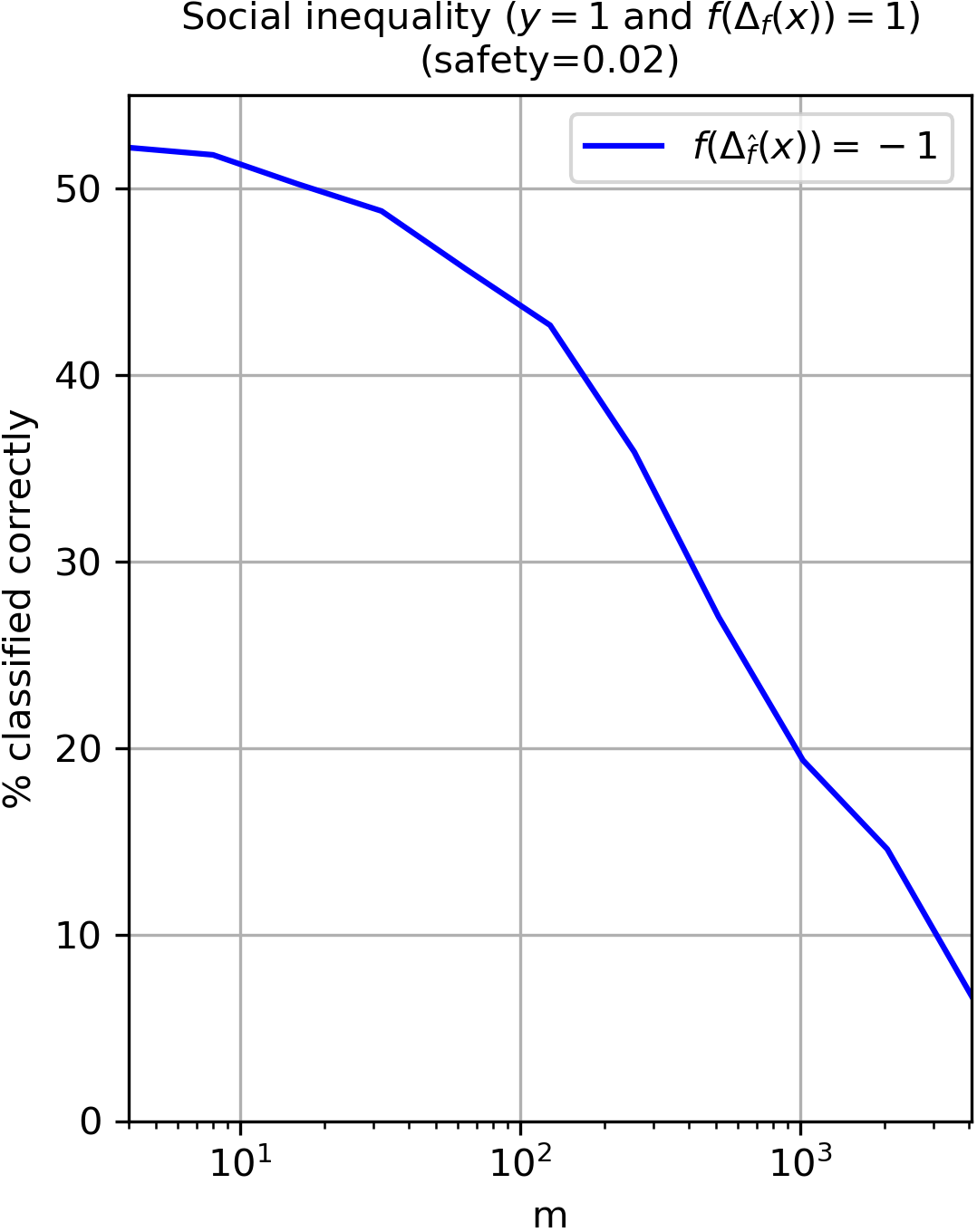}
	\qquad \qquad \qquad
	\includegraphics[width=0.3\columnwidth]{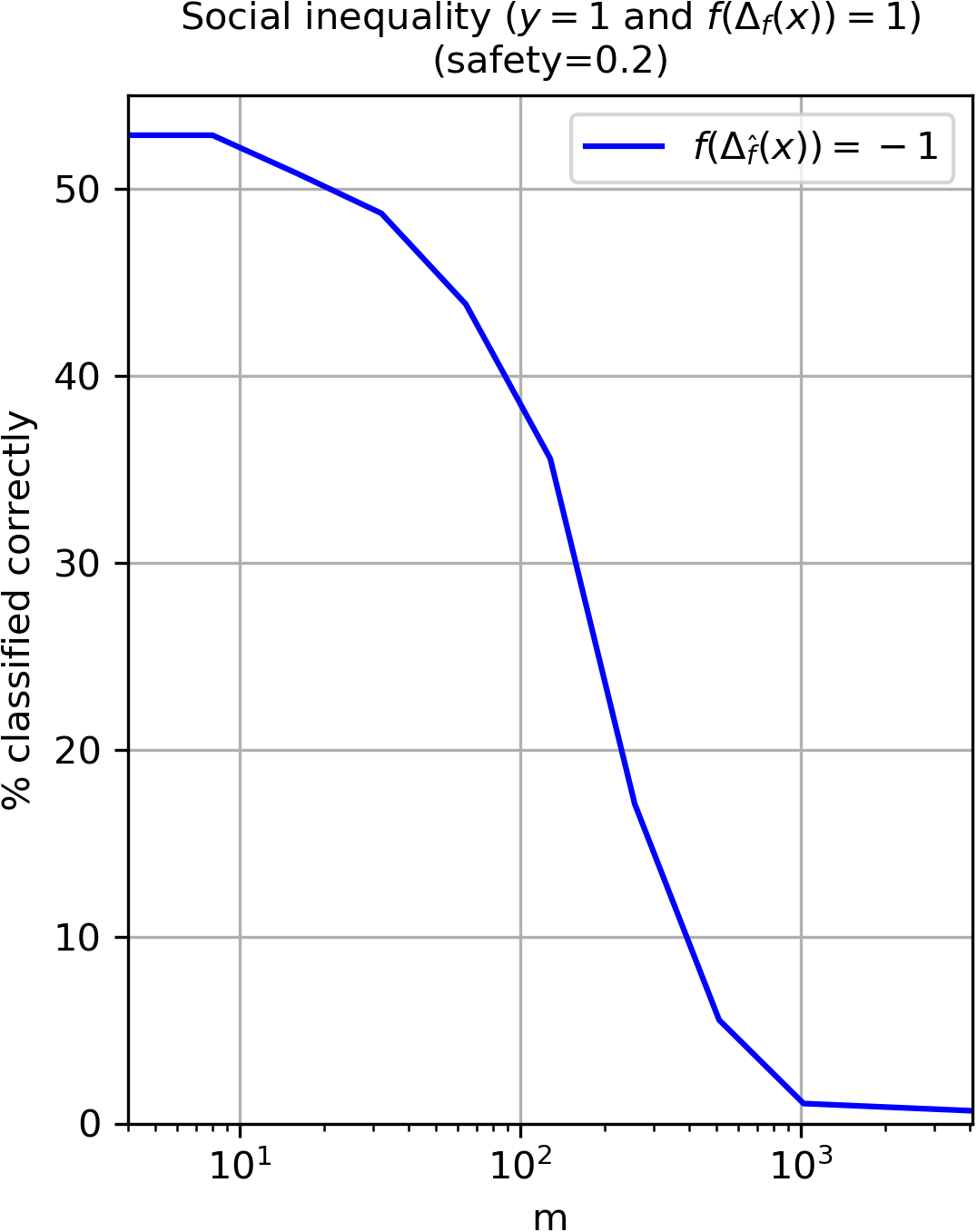}
	\caption{Social inequality for $\fhat$ ($\epsilon_2$)
	in the safe Contestant setting,
	for safety budgets $k=0.02$ (1\% of the total budget) and $k=0.2$ (10\% of the total budget).}
	\label{fig:safety_social_ineq}
\end{figure}



\end{document}